\pgfplotsset{width=10cm,compat=1.9}
\newcommand{\okedit}[1]{#1}
\newtheorem{theorem}{Theorem}
\newtheorem{lemma}{Lemma}
\newtheorem{proposition}[theorem]{Proposition}
\newtheorem{definition}{Definition}
\newtheorem{example}[theorem]{Example}
\newtheorem{corollary}[theorem]{Corollary}
\newif\ifappendix
\begin{document}
% The file aaai.sty is the style file for AAAI Press 
% proceedings, working notes, and technical reports.
%
\title{Domain-Liftability of Relational Marginal Polytopes\thanks{This is a preliminary version of a paper accepted to AISTATS~2020, presented at StarAI~2020 Workshop.}}
\author{
Ond\v{r}ej Ku\v{z}elka\\
Czech Technical University in Prague, Czech Republic\\
\And
Yuyi Wang\\
ETH Zurich, Switzerland\\
}
% {AAAI Press\\
% Association for the Advancement of Artificial Intelligence\\
% 2275 East Bayshore Road, Suite 160\\
% Palo Alto, California 94303\\
% }
\maketitle
\begin{abstract}
% \begin{quote}
We study computational aspects of {\em relational marginal polytopes} which are statistical relational learning counterparts of marginal polytopes, well-known from probabilistic graphical models.
Here, given some first-order logic formula, we can define its relational marginal statistic to be the fraction of groundings that make this formula true in a given possible world. 
For a list of first-order logic formulas, the {\em relational marginal polytope} is the set of all points that correspond to the expected values of the relational marginal statistics that are realizable. In this paper, we study the following two problems: (i) Do domain-liftability results for the 
% weighted first-order model counting problem 
partition functions of Markov logic networks (MLNs)
carry over to the problem of relational marginal polytope construction? (ii) Is the relational marginal polytope containment problem hard under some plausible complexity-theoretic assumptions? Our positive results have consequences for lifted weight learning of MLNs. In particular, we show that weight learning of MLNs is domain-liftable whenever the computation of the partition function of the respective MLNs is domain-liftable (this result has not been rigorously proven before).
% \end{quote}
\end{abstract}

\section{Introduction}

In this paper, we study two problems about objects, called {\em relational marginal polytopes} \cite{kuzelka2018relational}, which arise naturally in the study of statistical relational learning models such as Markov logic networks \cite{Richardson2006}. A Markov logic network is given by a set of first-order logic formulas and their weights. Roughly speaking, the corresponding relational marginal polytope then represents the set of all possible values of vectors of sufficient statistics that the Markov logic network with given first-order logic formulas may represent. Here, the sufficient statistics of the Markov logic network count the fraction of groundings of the first-order logic formulas defining it that are satisfied in a given possible world. For instance, for an MLN defined by formulas $\alpha_1 = \textit{sm}(x)$ and $\alpha_2 = \textit{sm}(x) \wedge \textit{friends}(x,y) \wedge \textit{sm}(y)$, the sufficient statistics would be the proportion of smokers in the population (the formula $\alpha_1$) and the fraction of pairs of people who are friends and, at the same time, both smoke (the formula $\alpha_2$).

%Deciding if a given point lies inside the relational marginal polytope is equivalent to asking if there exists a distribution that has given expected values of some parameters. These parameters are proportions of true groundings of the formulas that define the given MLN. These parameters are the sufficient statistics of the MLNs so they uniquely define them. 
The main advantages of specifying MLNs using sufficient statistics over specifying them using weights of formulas are (i) that it is clear what the parameters mean and (ii) that there are no issues with varying domain size caused by non-projectivity \cite{shalizi2013consistency} of MLNs; we refer to \cite{kuzelka2018relational,jaegerschulte,mittal} for further justification. An obvious disadvantage of specifying MLNs using sufficient statistics directly (instead of weights) is that they may be inconsistent, i.e. there may be no distribution satisfying them. Deciding if a given point lies inside the relational marginal polytope is equivalent to asking if there exists a distribution that has given expected values of the sufficient statistics. %Our negative result for the containment test that we present in Section \ref{sec:containment} shows that, unfortunately, checking if they are consistent is computationally hard (in the size of the domain). 

Marginal polytopes are important objects which have been studied for the propositional case in the literature \cite{sontag2008new,roughgarden2013marginals,bresler2014hardness}. We believe it is important to generalize the most important results from non-relational PGM literature to the relational setting and we believe the study of {\em relational} marginal polytopes is one of them. 
%In the propositional setting (i.e.\ non-relational), marginal polytopes have been thoroughly studied \cite{sontag2008new,roughgarden2013marginals,bresler2014hardness}. 
It is well known that even just deciding whether a point is contained in a marginal polytope is NP-hard. A marginal polytope may have exponentially many vertices in the dimension of the modeled data. However, in the relational setting, in particular for Markov logic networks, the situation is more complex because there is no single notion of dimensionality of the data; instead there is the number of first-order logic formulas defining the Markov logic network and the size of the domain, which is the set of objects modeled by the Markov logic network (e.g.\ users of a social network). It turns out that the number of vertices of a {\em relational marginal polytope} grows only polynomially with the size of the domain. It is quite frequent in applications of SRL that the number of first-order logic formulas is quite small and the domain is large, which is also one of the motivations for so-called {\em lifted inference} algorithms \cite{braz2005lifted,gogate2011probabilistic,broeck2011completeness} which exploit symmetries stemming from the first-order nature of statistical relational models. It is therefore important to understand the properties of relational marginal polytopes, in particular the complexity of the relevant problems as a function of the domain size, which is what we study in this paper. In particular the main contributions of this paper are as follows: (i) We show that if a class of Markov logic networks admits a domain-lifted inference algorithm (in particular an algorithm for computing the partition function of Markov logic networks, with runtime polynomial in the size of the domain) then there is also a domain-lifted algorithm for constructing the respective relational marginal polytopes. This allows us to extend recent domain-liftability results for weight learning of Markov logic networks from \cite{kuzelka.aistats.2019}. (ii) We show that the problem of deciding whether a point is contained in a relational marginal polytope given by a set of first-order logic formulas and a domain of size $n$ is, in general, not domain-liftable. This negative result for containment then also directly translates into a negative result for relational marginal polytope construction (because the number of vertices of the polytope is polynomial in the domain size).

The results in this paper are mostly meant as showing what is and what is not possible {\em theoretically}. Making the results {\em practical} is a different question.

\section{Preliminaries}\label{sec:preliminaries}

\subsection{First Order Logic}

We assume a function-free first-order language defined by a set of constants $\Delta$, a set of variables $\mathcal{V}$ and for each $k\in \mathbb{N}$ a set $\mathcal{R}_k$ of $k$-ary predicates. Variables start with lowercase letters and constants start with uppercase letters. An atom is $r(a_1,...,a_k)$ with $a_1,...,a_k\in \Delta \cup \mathcal{V}$ and $r\in \mathcal{R}_k$. A literal is an atom or its negation. A clause is a universally quantified disjunction of a finite set of literals.  
A clause in which none of the literals contains any variables is called {\em ground}. The set of grounding substitutions of a clause $\alpha$ w.r.t.\ a set of constants $\Delta$ is the set ${\Theta}(\alpha,\Delta)=\{\vartheta_1,...,\vartheta_m\}$ that contains substitutions to all variables occurring in $\alpha$ using constants from $\Delta$. A possible world $\omega$ is represented as a set of ground atoms that are true in $\omega$. The satisfaction relation $\models$ is defined in the usual way: $\omega \models \alpha$ means that the formula $\alpha$ is true in $\omega$. When $\mathbf{x}$ is a list of first-order logic variables then $|\mathbf{x}|$ is used to denote the length of this list.

An important concept that we will need in this paper is the {\em spectrum of a first-order logic sentence}.

\begin{definition}[Spectrum of sentence]
Let $\Phi$ be a first-order logic sentence. Its spectrum is the set of integers $n$ for which $\Phi$ has a model on domain of size $n$.
\end{definition}

\subsection{Complexity Classes ETIME and NETIME}

In this section we briefly describe the necessary results from complexity theory that we will use for proving negative results in this paper. 
\begin{definition}[Complexity classes]
Given a function $f$, the class $\mathbf{DTIME}[f(n)]$ (or $\mathbf{NTIME}[f(n)]$) of decision problems are solvable by $f(n)$ time-bounded deterministic (or non-deterministic) Turing machines. The class $\mathbf{ETIME}$ is defined as $\bigcup_{c>0} \mathbf{DTIME}(2^{cn})$ and the class $\mathbf{NETIME}$ equals to $\bigcup_{c>0} \mathbf{NTIME}(2^{cn})$. 
\end{definition}

The class $\mathbf{NETIME}$ is tightly related to the spectra of first-order logic sentences via the following result of Jones and Selman.

\begin{theorem}[Jones and Selman \citeyear{jones1974turing}]\label{thm:jonesselman}
A set $\mathcal{A} \subseteq \mathbb{N}$ is in $\mathbf{NETIME}$ if and only if $\mathcal{A}$ is the spectrum of a function-free and constant-free sentence $\Phi$.
\end{theorem}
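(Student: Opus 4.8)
The plan is to prove the two implications separately; both are classical, so I only sketch the moving parts. A preliminary reduction: whether function and constant symbols are permitted does not change the class of spectra. A $k$-ary function symbol can be replaced by a $(k+1)$-ary relation symbol together with a first-order axiom forcing it to be the graph of a total function, and a constant by a unary relation forced to name a singleton which is then existentially quantified inside the sentence; conversely a function-free, constant-free sentence is in particular a sentence. Hence the two notions of spectrum coincide, and it suffices to show that $\mathcal{A}$ is the spectrum of \emph{some} finite-signature sentence iff $\mathcal{A}\in\mathbf{NETIME}$, after which ``spectrum'' may be read as ``spectrum of a function-free, constant-free sentence''. I also use the usual convention that a number $n$ is presented in binary, so its length is $\Theta(\log n)$ and, measured in $n$ itself, $\mathbf{NETIME}=\bigcup_{c>0}\mathbf{NTIME}(n^{c})$.

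$(\Rightarrow)$ \emph{Every spectrum lies in $\mathbf{NETIME}$.} Fix $\Phi$; its maximum predicate arity $r$ and quantifier rank $q$ are constants. On input $n$, a machine nondeterministically guesses the at most $|\mathcal{R}|\cdot n^{r}$ bits describing a structure $\mathcal{M}$ with universe $\{0,\dots,n-1\}$ and then evaluates $\mathcal{M}\models\Phi$ deterministically, by the bottom-up procedure, in time $n^{O(q)}$. The whole run takes time polynomial in $n$, i.e.\ $2^{O(\log n)}$, which is exponential in the binary length of the input; hence $\mathrm{Spec}(\Phi)\in\mathbf{NETIME}$. (Guessing function tables as well would add only more polynomial-in-$n$ bits, which is why this direction does not even need the function-free restriction.)

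$(\Leftarrow)$ \emph{Every set in $\mathbf{NETIME}$ is a spectrum.} Let $\mathcal{A}$ be accepted by a nondeterministic Turing machine $M$ running in time $T(n)\le n^{c}$ on the binary encoding of $n$. I build a sentence $\Phi$ which, over a universe identified with $\{0,\dots,n-1\}$, has a model of size $n$ precisely when $M$ has an accepting run on input $n$. First, $\Phi$ forces a guessed binary relation to be a strict linear order $<$ (irreflexive, transitive, total), fixing the numerals $0<1<\dots<n-1$; lexicographically ordered $d$-tuples, for a suitable constant $d\ge c$, then address the $\le n^{c}$ tape cells and $\le n^{c}$ time steps. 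Second, $\Phi$ introduces relations encoding a computation tableau --- $\mathrm{Sym}_{a}(\bar t,\bar p)$, $\mathrm{Head}(\bar t,\bar p)$, $\mathrm{State}_{s}(\bar t)$, of arity at most $2d$ --- with first-order constraints asserting that: each cell holds exactly one symbol, and at each time step there is exactly one head position and exactly one state; the first row is $M$'s initial configuration on input $n$; consecutive rows obey $M$'s transition relation, a local condition on a constant-width window expressed using the successor/predecessor operations definable from $<$; and some row carries an accepting state. A model of $\Phi$ of size $n$ is then precisely an encoding of an accepting run of $M$ on $n$, and conversely, so $\mathrm{Spec}(\Phi)=\mathcal{A}$ for all large enough $n$; the finitely many small values are absorbed by conjoining or disjoining $\Phi$ with first-order sentences that pin down the universe size, since spectra are closed under finite modification.

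The step I expect to be the real obstacle lies inside this last construction: asserting that the first row of the tableau holds the \emph{binary representation of $n$}, while $n$ is available to $\Phi$ only implicitly, as the size of the universe. This is the classical ``arithmetization'' difficulty, handled by introducing auxiliary relations over the ordered universe --- for instance a relation $\mathrm{Pow}$ axiomatized so that $\mathrm{Pow}(x,y)$ holds iff $y=2^{x}$ (for $2^{x}<n$), built up from $x=0$ by doubling, which in turn rests on a first-order-defined addition relation --- and then defining, again in first-order logic, the predicate ``the $i$-th bit of $n$ is $1$'' from these powers of two; there is room for this because $\log n\le n$. The remaining items --- confirming $d$ coordinates suffice for the time bound, spelling out the transition windows, and verifying that the function/constant eliminations preserve spectra --- are routine but tedious, and I would push them to the appendix.
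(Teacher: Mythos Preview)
The paper does not prove this theorem at all: it is quoted as a classical result of Jones and Selman and used as a black box (to derive Proposition~\ref{corollary:jaeger} via Jaeger). There is therefore no ``paper's own proof'' to compare your attempt against.

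That said, your sketch is the standard Jones--Selman argument and is essentially correct. The easy direction (every spectrum is in $\mathbf{NETIME}$: guess a structure of size $n$ with $n^{O(1)}$ bits and model-check in time $n^{O(1)}$, which is $2^{O(|n|)}$ in the binary input length) is fine as stated. For the hard direction you have correctly isolated the only genuinely nontrivial step --- first-order-defining the bit predicate of the universe size $n$ so that the initial tape can be laid down --- and your plan (axiomatize a linear order, define addition and powers of two on an initial segment, read off the bits of $n$) is the classical route. One remark: your ``finite modification'' clean-up at the end is unnecessary once the arithmetization is done carefully, since the construction already works for all $n$; and if you did want to invoke it, note that closure of spectra under arbitrary finite modification is itself nontrivial (it amounts to expressing ``the universe has exactly $k$ elements'' and taking Boolean combinations, which is fine, but worth saying explicitly). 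Otherwise the outline is sound and matches the original proof in spirit.
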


\subsection{Markov Logic Networks}\label{sec:preliminaries:mlns}

A Markov logic network \cite{Richardson2006} (MLN) is a set of weighted first-order logic formulas $(\alpha,w)$, where $w\in \mathbb{R}$ and $\alpha$ is a function-free and quantifier-free first-order formula. The semantics are defined w.r.t.\ the groundings of the first-order formulas, relative to some finite set of constants $\Delta$, called the domain. An MLN $\Phi$ induces the probability distribution over possible worlds $\omega \in \Omega$: 
$
p_{\Phi}(\omega) = \frac{1}{Z} \exp \left(\sum_{(\alpha,w) \in \Phi} w \cdot N(\alpha,\omega)\right),
$
where $N(\alpha, \omega)$ is the number of groundings of $\alpha$ satisfied in $\omega$, and $Z$, called {\em partition function}, is a normalization constant to ensure that $p_{\Phi}$ is a  probability distribution. It turns out to be more convenient to replace $N(\alpha,\omega)$ in the definition of MLNs by 
$Q(\alpha,\omega) = \frac{1}{|\Delta|^{|\textit{vars}(\alpha)|}} \sum_{\vartheta \in \Theta(\alpha,\Delta)} \mathds{1}(\omega \models \alpha\vartheta),$  
where $\Theta(\alpha,\Delta)$ is the set of all grounding substitutions of $\alpha$'s variables using constants from $\Delta$ and $\mathds{1}(\omega \models \alpha\vartheta)$ is the indicator function, which is equal to 1 when $\alpha\vartheta$ is true in the possible world $\omega$. Thus, $Q(\alpha,\omega)$ is the fraction of the groundings of $\alpha$ satisfied in $\omega$. Equivalently, we can say that $Q(\alpha,\omega)$ is the probability that $\omega \models \alpha\varphi$ if we pick $\varphi$ from $\Theta(\alpha,\Delta)$ uniformly at random. With this notation we will write the probability of a possible world $\omega \in \Omega$ as: 
$
p_{\Phi}(\omega) = \frac{1}{Z} \exp \left(\sum_{(\alpha,w) \in \Phi} w \cdot Q(\alpha,\omega)\right).
$

% \begin{note}
% {\em It is possible to define MLNs and relational marginal problems (described in Section \ref{sec:preliminaries:relmarg}) using injective substitutions, which is sometimes more convenient, equally expressive \cite{DBLP:conf/aaai/BuchmanP15}, and which was used for instance in \cite{kuzelka2018relational,kuzelka.aistats.2019}. In this work we do not use the definition of MLNs based on injective substitutions. All results presented in this paper translate straightforwardly to the case with injective substitutions as well.}
% \end{note}

\subsection{Relational Marginal Problems}\label{sec:preliminaries:relmarg}

An alternative way to view a Markov logic network $\Phi = \{ (\alpha_1,w_1), \dots, (\alpha_m,w_m) \}$ is to think of it as a maximum entropy distribution satisfying given marginal constraints $\mathbb{E}[Q(\alpha_i,.)] = \theta_i$.
%Assuming that we are given the values that the formula statistics $Q(\alpha_1,\omega)$, $\dots$, $Q(\alpha_m,\omega)$ should have in expectation 
Assuming we have the expected values of the formula statistics $\mathbb{E}[Q(\alpha, .)]$, we can define the following maximum entropy problem~\cite{kuzelka2018relational}.

\paragraph{Relational Marginal Problem (Formulation):}

\begin{align}
    \min_{\{ P_\omega \colon \omega \in \Omega \}}  \sum_{\omega\in \Omega} P_{\omega} \log{{P_\omega}} \quad \textit{ s.t.}\label{eq:maximum_entropy_criterion} \\
    \forall i = 1,\dots,m: \sum_{\omega \in \Omega} P_\omega \cdot Q(\alpha_i,\omega) = \theta_i\label{eq:maxent_marginal_constraints} \\
\forall \omega \in \Omega : P_{\omega} \geq 0, \sum_{\omega \in \Omega} P_{\okedit{\omega}} = 1\label{eq:maximum_entropy_cnormalization}
\end{align}

\noindent Here, the $P_\omega$'s are the problem's decision variables, each of which represents the probability of one possible world $\omega \in \Omega$. Line (\ref{eq:maximum_entropy_criterion}) is the maximum entropy criterion, which is shown here as the minimization of the negative entropy; Line (\ref{eq:maxent_marginal_constraints}) shows the constraints given by the statistics; and  Line~(\ref{eq:maximum_entropy_cnormalization}) provides the normalization constraints for the probability distribution.

Assuming there exists a feasible solution satisfying $\forall \omega : P_\omega > 0$ (we call such a solution {\em positive}), the optimal solution of the above maximum entropy problem is an MLN 
$P_\omega = \frac{1}{Z} \exp{\left( \sum_{(\alpha_i,w_i) \in \Phi} w_i \cdot Q(\alpha_i,\omega) \right)}$ 
where the parameters $\mathbf{w} = (w_1,\dots,w_m)$ are obtained by maximizing the dual criterion, which also happens to be equivalent to the log-likelihood of the MLN w.r.t.\ a (possibly fictitious) training example $\widehat{\omega}$ that is over the same domain $\Delta$ and satisfies $Q(\alpha_i,\widehat{\omega}) = \theta_i$ for all the formula statistics (we refer to \cite{kuzelka2018relational} for details).

\subsection{Inference Using Weighted Model Counting}\label{sec:WFOMC}

For exact inference in MLNs, one generally needs to be able to compute the partition function $Z = \sum_{\omega \in \Omega} e^{\sum_{\alpha_j} \lambda_j Q_\omega(\alpha_j)}$. 
Computation of the partition function $Z$ can be converted to a {\em weighted first-order model counting problem} (WFOMC).

\begin{definition}[WFOMC \cite{broeck2011completeness}]
Let $w(P)$ and $\overline{w}(P)$ be functions from predicates to real numbers (we call $w$ and $\overline{w}$ {\em weight functions}) and let $\Gamma$ be a first-order theory. Then $\mathbf{WFOMC}(\Phi,w,\overline{w}) =
    \sum_{\omega \in \Omega : \omega \models \Gamma} \prod_{a \in \mathcal{P}(\omega)} w(\textit{Pred}(a)) \cdot \prod_{a \in \mathcal{N}(\omega)} \overline{w}(\textit{Pred}(a))$, 
%\begin{align*}
    % &\mathbf{WFOMC}(\Phi,w,\overline{w}) =\\ 
    % &\sum_{\omega \in \Omega : \omega \models \Gamma} \prod_{a \in \mathcal{P}(\omega)} w(\textit{Pred}(a)) \prod_{a \in \mathcal{N}(\omega)} \overline{w}(\textit{Pred}(a))
%\end{align*}
where $\mathcal{P}(\omega)$ and $\mathcal{N}(\omega)$ denote the positive literals that are true and false in $\omega$, respectively, and $\textit{Pred}(a)$ denotes the predicate of $a$ (e.g. $\textit{Pred}(\textit{friends}(\textit{Alice},\textit{Bob})) = \textit{friends}$).
\end{definition}

To compute the partition function $Z$ using weighted model counting, we may proceed as in \cite{broeck2011completeness}. Let a Markov logic network $\Phi$ be given. Here, for simplicity of exposition, we will assume that the formulas in $\Phi$ do not contain constants (we refer to \cite{broeck2011completeness} for the general case). For every weighted formula $(\alpha_i,\lambda_i) \in \Phi$, where the free variables in $\alpha_i$ are exactly $x_1$, $\dots$, $x_k$, we create a new formula 
%\begin{equation*}
    $\forall x_1,\dots,x_k : \xi_i(x_1,\dots,x_k) \Leftrightarrow \alpha_i(x_1,\dots,x_k)$
%\end{equation*}
where $\xi$ is a new fresh predicate; we denote the resulting set of formulas $\Gamma$. Then we set 
$w(\xi_i) = \exp{\left(|\Delta|^{-|\textit{Vars}(\alpha_i)|} \cdot \lambda_i \right)}$ 
and $\overline{w}(\xi_i) = 1$ and for all other predicates we set both $w$ and $\overline{w}$ equal to 1. It is easy to check that then $\mathbf{WFOMC}(\Gamma,w,\overline{w}) = Z$, which is what we needed to compute.

\subsection{Liftability}

Importantly, there are classes of first-order logic theories for which weighted model counting is polynomial-time. In particular, as shown in \cite{van2014skolemization}, when the theory consists only of first-order logic sentences, each of which contains at most two logic variables, the weighted model count can be computed in time polynomial in the number of elements in the domain $\Delta$ over which the set of possible worlds $\Omega$ is defined. It follows from the translation described in the previous section that this also means that computing the partition function of $2$-variable MLNs can be done in time polynomial in the size of the domain and in the absolute value of the MLN's weights. This is not the case in general when the number of variables in the formulas is greater than two unless P = \#P$_1$\footnote{\#P$_1$ is the set of \#P problems over a unary alphabet.}~\cite{beame2015symmetric}. Within statistical relational learning, the term used for problems that have such polynomial-time algorithms is {\em domain liftability}.
Next, we define what we mean by {\em domain liftability} \cite{broeck2011completeness}.

\begin{definition}[Domain liftability]
An algorithm for computing the partition function $Z$ of an MLN $\Phi = \{ (\alpha_1, \lambda_1), \dots, (\alpha_l, \lambda_l) \}$, \okedit{where each $\lambda_i$ is represented by two numbers\footnote{The restriction on the representation of the weights ensures that the partition function will always be a rational number. Moreover, one can verify that the number of bits needed to represent the partition function will also be polynomial in the number of bits needed to represent the numbers $a_i$, $b_i$ and in the domain size $|\Delta|$.} $a_i, b_i \in \mathbb{N}$ as $\lambda_i =  \ln a_i - \ln b_i$}, is said to be {\em  domain-lifted} if it runs in time polynomial in the size of the domain $\Delta$ and in \okedit{the number of bits needed to encode the numbers $a_i$ and $b_i$}. A class of MLNs is said to be {\em domain-liftable} if there is a domain lifted algorithm for computing the partition function $Z$ for MLNs from this class.
\end{definition}

\noindent The definition that we use here differs slightly from the original definition by Van den Broeck \cite{broeck2011completeness} in that it also requires lifted algorithms to depend polynomially on the \okedit{size of the representation of the} formulas' weights. A justification for this alternative definition follows from the work of Jaeger \cite{jaeger} (Section 4.2). In particular, all existing domain-lifted inference algorithms are also domain-lifted according to our definition. Another small technical difference is that we define domain-liftability directly in terms of complexity of computing the partition function $Z$.

\section{Relational Marginal Polytopes}\label{sec:polytopes}

Now we can finally define relational marginal polytopes \cite{kuzelka2018relational} which are the main focus of this paper. These represent the expected values for the vectors of statistics of the given formulas that are possible.

\begin{definition}[Relational marginal polytope \cite{kuzelka2018relational}]
Let $\Omega$ be the set of possible worlds on domain $\Delta$ and $\Phi = (\alpha_1, \dots, \alpha_m)$ be a list of formulas. We define the relational marginal polytope $\mathbf{RMP}(\Phi,\Delta)$ w.r.t.\ $\Phi$ as $\mathbf{RMP}(\Phi,\Delta) = \{ (x_1,\dots,x_m) \in \mathbb{R}^m : \exists \mbox{ dist.\ on } 
     \Omega\; { s.t. } \;\mathbb{E}[Q(\alpha_1,\omega)] = x_1 \wedge \dots \wedge \mathbb{E}[Q(\alpha_m,\omega)] = x_m \}.$
\end{definition}

\noindent The relational marginal polytope w.r.t.\ a given list of formulas $(\alpha_1$, $\dots$, $\alpha_m)$ can be equivalently defined as the convex hull of the set 
$\{ (Q(\alpha_1,\omega),\dots,Q(\alpha_m, \omega)) : \omega \in \Omega \}$.

Next, we illustrate the fact that relational marginal polytopes can be quite complex using the following example from \cite{kuzelka.aistats.2019}. In fact, we show later in this paper that, in general, the problem of deciding whether a point belongs to a relational marginal polytope is not domain-liftable.

\begin{example}\label{ex:densities}
Consider the formulas $\alpha = \textit{friends}(x_1,x_2)$ and $\beta = \textit{friends}(x_1,x_2) \wedge \textit{friends}(x_2,x_3) \wedge \textit{friends}(x_3,x_1)$. Let $\Delta = \{ C_1, \dots, C_{100} \}$ be the set of domain elements and $\Omega$ be the respective set of possible worlds over the first-order language given by the predicate $\textit{friends}/2$ and the constants from $\Delta$. The possible worlds $\omega \in \Omega$ may be thought of as representing social networks. Then $Q(\alpha,\omega)$ corresponds to the ``frequency'' of friendships in the network and $Q(\beta,\omega)$ to the ``frequency'' of friendship-triangles. We can then see easily why there is, for instance, no distribution with $\mathbb{E}[Q(\alpha,\omega)] = 0$ and $\mathbb{E}[Q(\beta,\omega)] = 0.5$ (as graphs without edges cannot have a positive number of triangles). Hence, the point $(0,1)$ will not be contained in the relational marginal polytope. In general relational marginal polytopes may be quite complicated objects as, for instance, in this case to construct the respective polytope we would also need to also find the ``extremal'' directed graphs with maximum number of triangles with a constrained number of edges etc.
\end{example}

Next, we define what it means for a point to be in the $\eta$-interior of a polytope.

\begin{definition}[Interiority]
Let $\eta > 0$, $\mathbf{P}$ be a polytope and $A^= \mathbf{x} = \mathbf{c}$ be the maximal linearly independent system of linear equations that hold for the vertices of $\mathbf{P}$. A point $\theta$ is said to be in the $\eta$-interior of $\mathbf{P}$ if 
$\{\theta' | A^= \theta' = \mathbf{c}, \| \theta' - \theta \|_2 \leq \eta \} \subseteq \mathbf{P}. $
\end{definition}

\noindent We need to consider the system of linear equations $A^= \mathbf{x} = \mathbf{c}$ in the definition of interiority because the polytope may live in a lower-dimensional subset of the given space. Our definition of interiority is also often called {\em relative interiority} in the literature. 

% If the vector of formula statistics' estimates $\theta$ is in the $\eta$-interior of the respective relational marginal polytope for some $\eta > 0$, then there always exists a positive distribution satisfying the marginal constraints given by the statistics.

% \section{Simple Properties of Relational Marginal Polytopes}

% \begin{proposition}\label{prop:simple1}

% \end{proposition}

\section{A Lifted Reduction to Partition Functions of MLNs}\label{sec:lifted}
In this section, we show that computing relational marginal polytopes can be reduced to computing the partition functions of some MLNs.
% In order to prove this result, we first define several notations. 
Instead of $Q(\alpha,\omega)$, we deal with $N(\alpha,\omega)$ in this section, since there is a simple one-to-one mapping $Q(\alpha,\omega) = {|\Delta|^{-|vars(\alpha)|}}\cdot N(\alpha,\omega)$ and $N(\alpha,\omega)$ only uses integers. 
We call the convex hull of the set $\{(N(\alpha_1,\omega),\ldots, N(\alpha_m,\omega)):\omega\in \Omega\}$ the \emph{integer relational marginal polytope} $\mathbf{IRMP}(\Phi,\Delta)$. 
Let $P_i := \{N(\alpha_i,\omega):\omega\in \Omega\}$. 
We can see that $\{ (N(\alpha_1,\omega),\dots,N(\alpha_m, \omega)) : \omega \in \Omega \}$ is a subset of $\bigtimes_{i=1}^m P_i$. 
The algorithm described in the following theorem is in polynomial time. Even though it would not be efficient in practice, our goal here is to show a reduction, instead of devising a fast algorithm.  

\begin{theorem}\label{theorem:partition_function}
Let $\Phi = (\alpha_1,\dots,\alpha_m)$ be a list of first-order logic formulas. If the class of MLNs $\{(\alpha_i,\lambda_i)\}$ is domain-liftable 
then computing the facets of the relational marginal polytope $\mathbf{RMP}(\Phi,\Delta)$ \okedit{can be done in time polynomial in $|\Delta|$}. %is also domain-liftable.
\end{theorem}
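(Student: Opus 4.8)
The plan is to reconstruct $\mathbf{IRMP}(\Phi,\Delta)$ vertex by vertex using the liftable partition function as an optimization oracle, and then translate back to $\mathbf{RMP}(\Phi,\Delta)$ via the scaling $Q(\alpha_i,\omega)=|\Delta|^{-|\textit{vars}(\alpha_i)|}N(\alpha_i,\omega)$. The crucial observation is that each coordinate $P_i=\{N(\alpha_i,\omega):\omega\in\Omega\}$ is a set of integers lying in $\{0,1,\dots,|\Delta|^{|\textit{vars}(\alpha_i)|}\}$, so $\bigtimes_{i=1}^m P_i$ has only polynomially many points in $|\Delta|$ (for fixed $m$), and hence $\mathbf{IRMP}(\Phi,\Delta)$ has polynomially many vertices. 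So it suffices to (a) enumerate candidate vertices and (b) decide, for each candidate in direction-space, which integer points are actually realized by some $\omega$.

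First I would build a \emph{linear-optimization oracle}: given a weight vector $\mathbf{c}=(c_1,\dots,c_m)\in\mathbb{Z}^m$, compute $\max_{\omega\in\Omega}\sum_i c_i N(\alpha_i,\omega)$ and an argmax. To do this, consider the MLN $\Phi_\lambda=\{(\alpha_i,\lambda c_i/|\Delta|^{-|\textit{vars}(\alpha_i)|})\}$ — equivalently, after converting $N$ to $Q$, an MLN whose weights scale linearly with a parameter $\lambda$. By the domain-liftability assumption and the WFOMC translation in Section~\ref{sec:WFOMC}, its partition function $Z(\lambda)$ is computable in time polynomial in $|\Delta|$ and in the bit-size of the weights. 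As $\lambda\to\infty$, the distribution $p_{\Phi_\lambda}$ concentrates on the worlds maximizing $\sum_i c_i N(\alpha_i,\omega)$; standard arguments (cf.\ the analysis of MAP-via-annealing) show that it is enough to take $\lambda$ polynomially large in the input to read off the maximum value of $\sum_i c_i N(\alpha_i,\omega)$ from $\frac{1}{\lambda}\log Z(\lambda)$, since the gap between the optimum and the second-best value of the integer objective is at least $1$ and the number of worlds is at most $2^{\mathrm{poly}}$. To recover an actual optimal $\omega$ one can condition predicate-by-predicate: fix the truth value of one ground atom (by adding a unit clause with a large weight, which keeps the theory in the liftable class if we are careful, or by a small-domain argument), check whether the optimum is preserved via another partition-function computation, and iterate; this yields an argmax world in polynomially many oracle calls. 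One also needs to extract the vertex coordinates $(N(\alpha_1,\omega),\dots,N(\alpha_m,\omega))$ of that world, which is a direct count.

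Next I would enumerate the vertices of $\mathbf{IRMP}(\Phi,\Delta)$. Since the polytope lives in $\mathbb{R}^m$ with $m$ fixed, its normal fan has polynomially many full-dimensional cones; concretely, every vertex is the unique maximizer of some $\mathbf{c}$, and the set of combinatorially distinct directions $\mathbf{c}$ is controlled by the $O(|\Delta|^{\mathrm{poly}})$ hyperplanes spanned by the candidate integer points of $\bigtimes_i P_i$. I would generate a polynomial-size set of ``generic'' directions hitting every cone (e.g., normals to all pairwise-separating hyperplanes of the candidate grid, perturbed), call the oracle on each to collect the true vertex set $V$, then compute the convex hull of $V$ in fixed dimension $m$ in polynomial time by standard computational-geometry algorithms, obtaining the facets (and the affine-hull equations $A^=\mathbf{x}=\mathbf{c}$) of $\mathbf{IRMP}(\Phi,\Delta)$. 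Finally, applying the coordinatewise scaling $x_i\mapsto |\Delta|^{-|\textit{vars}(\alpha_i)|}x_i$ to vertices and correspondingly rescaling the facet inequalities gives the facets of $\mathbf{RMP}(\Phi,\Delta)$, completing the proof.

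The main obstacle I expect is making the optimization oracle rigorous: turning ``the partition function is liftable'' into ``linear optimization over the worlds is liftable'' requires (i) controlling the annealing parameter $\lambda$ so that the weights stay polynomially bounded in bit-size — which our definition of domain-liftability explicitly allows, but the argument that polynomial $\lambda$ suffices uses the integrality gap of the objective and must be stated carefully — and (ii) recovering an explicit optimal world (not just the optimal value), where adding large-weight unit clauses to pin down atoms must be done without leaving the liftable class; an alternative is to note we only need the \emph{vertex coordinates}, which can be obtained from marginal expectations $\mathbb{E}_{p_{\Phi_\lambda}}[N(\alpha_i,\cdot)]$ that are themselves computable from partition-function evaluations by finite differencing in $\lambda_i$. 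Handling these two points cleanly is where the real work lies; the convex-hull and rescaling steps are routine given the fixed dimension $m$.
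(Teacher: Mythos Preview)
Your approach is correct in its core idea and shares with the paper the central insight: the domain-liftable partition function can be used as a linear-optimization oracle over $\{N(\Phi,\omega):\omega\in\Omega\}$, because for integer objectives the gap between optimum and second-best is at least $1$ and choosing weights of order $\ln|\Omega|$ makes the maximizing term dominate $Z$.

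Where you diverge is in how you exploit the oracle. The paper enumerates candidate \emph{facet normals} directly---taking every $m$-tuple of grid points in $\bigtimes_i[r_i]$, computing its (generalized) cross product $\vec a$, and then using a single partition-function evaluation with weights $2a_i\ln|\Omega|$ to read off the right-hand side $b=\max_\omega \vec a\cdot N(\Phi,\omega)$; a final redundancy-elimination pass yields the facets. Crucially, this only ever needs the \emph{optimal value}, never an optimizing world or its coordinates. Your route is the dual one: enumerate directions, recover \emph{vertices}, and take a convex hull. That forces you to extract vertex coordinates, which is precisely the obstacle you flag. Your first suggestion---pinning ground atoms one by one with heavy unit clauses---is genuinely fragile: introducing constants can take the theory outside the liftable class, and you would need the theory to remain liftable after adding up to $|\Delta|^k$ such clauses simultaneously. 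Your alternative via finite differences of $\log Z$ in the weights does work (and a cleaner variant is lexicographic perturbation of the direction to make the optimizer unique and then read each coordinate from a second max-value query), but all of this machinery is unnecessary: the paper's facet-side argument sidesteps the argmax issue entirely and keeps the weights in the exact $\ln a-\ln b$ form required by the domain-liftability definition.
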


\begin{proof}
% Let $n$ be the domainsize and 
% Let $k_i$ be the arity of $\alpha_i$. 
We first observe that, for any world $\omega$, $N(\alpha_i,\omega)$ can only take value in $[r_i] = \{0,1,\ldots, r_i\}$ where $r_i := \max P_i = O(|\Delta|^{|vars(\alpha_i)|}).$  %O(n^{k_i})$. 
Hence, $N(\Phi,\omega)$ can only take value in $\bigtimes_{i=1}^m [r_i]$, which are integer points (vectors) in an $m$-dimensional space,  whose $i$-th entries represent $N(\alpha_i,\omega)$. 

\okedit{If $\mathbf{IRMP}(\Phi,\Delta)$ is full-dimensional, each of its facets} corresponds to a halfspace, which is of the form $\sum_i a_i x_i \le b$, where $x_i$ is the $i$-th coordinate, and can be written as $\Vec{a} \cdot \Vec{x} \le b$. \okedit{If $\mathbf{IRMP}(\Phi,\Delta)$ is not full-dimensional, each of its facets corresponds to an intersection of such half-spaces.}
To determine these halfspaces, we first list all possible normal vectors $\Vec{a}$, and then find $b$ for each normal vector. 

\paragraph{Enumerate all possible $\Vec{a}$'s}
Note that every vertex of the $\mathbf{IRMP}$ 
% (which cannot be represented as a convex combination of two different points in the polytope) 
is an integer vector in $\bigtimes_{i=1}^m [r_i]$. 
%Every facet of this $\mathbf{IRMP}$ passes through at least $m$ points in $\bigtimes_{i=1}^m [r_i]$. 
We list all possible linearly-independent $m$-tuples $(p^{(1)},p^{(2)},\ldots,p^{(m)})$ of $\bigtimes_{i=1}^m [r_i]$, where every $p^{(j)}$ is a point in $\bigtimes_{i=1}^m [r_i]$. 
For every tuple, we can efficiently compute the perpendicular vector of the hyperplane \okedit{that} passes every point in the tuple. For example, we subtract the coordinates of one of the points from all of the others and then compute their generalized cross product to obtain the perpendicular vector $\Vec{v}$. To enumerate every possible $\Vec{a}$, both $\Vec{v}$ and $-\Vec{v}$ need to be taken into account. In this way, every coordinate of every $\Vec{a}$ is an integer (which is not necessary but makes the next step easier). 
% Every facet of this polytope corresponds to a hyperplane that passes through some points in $\bigtimes_{i=1}^m [r_i]$. We can enumerate all these possible hyperplanes by considering every vector in $\bigtimes_{i=1}^m [r_i]$ except $(0,0,\ldots,0)$:  
% Every vector $v \in \bigtimes_{i=1}^m [r_i]$ defines $2\ell$ hyperplanes, where $\ell$ is the number of non-zero values in the vector $v$. Let $V := \prod_{i:v_i\neq 0} v_i$. The $2\ell$ hyperplanes defined by $v$ are the set of points $(x_1,x_2,\ldots,x_m)$ satisfying $\sum_{i:v_i\neq 0} a_i x_i = b$ where $a_i = \pm V/v_i$ (for each $i$, $a_i$ can be either positive or negative, so there are $2\ell$ hyperplanes) and $b$ will be determined by computing the partition function of some MLN in the next paragraph. 
% using weighted first-order model counting.
% This can be done because every possible facet is parallel to some hyperplane that passes through integer points on the axes. The absolute values of these integer points on the axes together can be written as a vector in $\bigtimes_{i=1}^m [r_i]$ since the integer points on every axis $i$ have at most one non-zero value, which is at position $i$. Note that if a facet is parallel to some axes, then this facet has also been considered by checking a vector that has $0$ on those positions. 
It is not difficult to see that the number of $\Vec{a}$'s that we collect is polynomial in $|\Delta|$ and every entry $a_i$ is polynomial in $|\Delta|$ as well. 

% Now we focus on the case where every $a_i$ is non-negative, and there exists a similar to deal with the cases in which some $a_i$ is negative. 
\paragraph{Find $b$ for each $\Vec{a}$} For every vector $\Vec{a}$, %, we do the same:
to determine the integer $b$ for the halfspace $\sum_{i} a_i x_i \le b$, 
% This is done by applying weighted first-order model counting: For every logical formula $\alpha_i$, we create a new atom $f_i(\text{Vars}(\alpha_i))\equiv\alpha_i.$ The weight functions of the new atom are $w(f_i) =|\Omega|^{2a_i}$ and $\bar{w}(f_i)=1$ if $a_i$ is positive, or else $w(f_i) = 1$ and $\bar{w}(f_i)= |\Omega|^{-2a_i}$. 
% This is done 
we construct a Markov logic network $\{(\alpha_i,  2a_i\ln |\Omega|): 1\le i\le m\}$ and then compute the partition function of this MLN. 
% We describe in the follows for the case where all $a_i$ are positive, and the case where some $a_i$ are negative can be dealt with in a similar way but the corresponding element in the MLN becomes $(\neg\alpha_i,-2a_i\ln |\Omega|)$. 
If for every possible world $\omega$, $\sum_i a_i N(\alpha_i,\omega) \le b$, then the partition function is $\le |\Omega|^{2b+1}$ since there are $|\Omega|$ possible worlds. If there exists a world $\omega$ such that $\sum_i a_i N(\alpha_i,\omega) > b$, then the partition function $\ge |\Omega|^{2b+2}$, since every $a_i$ is an integer.
% If for every possible world $\omega$, $\sum_i a_i N(\alpha_i,\omega) \le b$, then $\mathbf{WFOMC}(\textcolor{red}{need to define this properly}) \le |\Omega|^{2b+1}$ since there are $|\Omega|$ possible worlds. If there exists a world $\omega$ such that $\sum_i a_i N(\alpha_i,\omega) > b$, then $\mathbf{WFOMC}(\textcolor{red}{need to define this properly}) \ge |\Omega|^{2b+2}$.
Therefore, one can easily find the smallest integer $b$ such that there does not exist any world satisfying $\sum_i a_i N(\alpha_i,\omega) > b$. 

After the steps above, we obtain a set of linear inequality constraints (halfspaces). We can apply Lemma 2 %\ref{lemma:minimalset} 
(in the appendix) to get a minimal set of constraints that correspond to the facets of the $\mathbf{IRMP}$, which can be transformed into the facets of $\mathbf{RMP}$ efficiently. 
\end{proof}

Here we would like to note that the techniques for lifted linear programming developed in the work \cite{mladenov,DBLP:journals/ai/KerstingMT17} are not directly relevant to our work. In our case, all the involved linear programs are already small (are polynomial in the size of the domain).

% \begin{lemma}
% Let $\Phi = (\alpha_1,\dots,\alpha_m)$ be a list of first-order logic formulas and $n$ the domainsize. Given a direction (a vector) in the $m$-dimensional space, 
% \end{lemma}

\subsection{A Corollary for Lifted Weight Learning of Markov Logic Networks}

Recently, it has been shown in \cite{kuzelka.aistats.2019} that weight learning of Markov logic networks based on maximization of log-likelihood is domain-liftable for the $2$-variable fragment of Markov logic networks. Previously, it had been shown in \cite{van.haaren.mlj} that computing the gradients of log-likelihood is domain-liftable for liftable fragments of Markov logic networks. What the work \cite{kuzelka.aistats.2019} added to this was to show that the whole weight learning problem is also domain-liftable, not just the procedure that computes the gradients. This is formally stated in the next theorem from \cite{kuzelka.aistats.2019}.

\begin{theorem}[Theorem 11 in \cite{kuzelka.aistats.2019}]\label{thm:mainaistats}
Let $\Phi = \{\alpha_1,\dots,\alpha_l\}$ be a set of quantifier-free first-order logic formulas, each with at most 2 variables. Let $\Phi_0$ be a set of universally quantified first-order logic sentences, each also with at most 2 variables. Let $\Omega_{\Phi_0}$ be the set of models of $\Phi_0$ over a given domain $\Delta$. Let $\widehat{\omega} \in \Omega$ be a training example. Then there is an algorithm which finds weights of the MLN $\mathcal{M}$ given by formulas $\Phi$ such that the log-likelihood of $\mathcal{M}$ given the training example $\widehat{\omega}$ is within $\varepsilon$ of the optimum. The algorithm runs in time polynomial in $|\Delta|$, $1/\varepsilon$ and $1/\eta$ where $\eta$ is the interiority of the vector $Q_{\widehat{\omega}}(\Phi)$ in the relational marginal polytope $\textit{RMP}(\Phi,\Omega_{\Phi_0})$. 
\end{theorem}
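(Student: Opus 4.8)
The plan is to cast weight learning as the maximization of a concave function and to run projected gradient ascent on a ball whose radius is controlled by the interiority $\eta$, with each iteration reduced to a single domain-lifted partition-function computation. Write $\theta := Q_{\widehat{\omega}}(\Phi) = (Q(\alpha_1,\widehat{\omega}),\dots,Q(\alpha_l,\widehat{\omega}))$; for a weight vector $\mathbf{w}\in\mathbb{R}^l$ let $Z(\mathbf{w}) := \sum_{\omega\in\Omega_{\Phi_0}}\exp\!\big(\sum_i w_i Q(\alpha_i,\omega)\big)$ and $\ell(\mathbf{w}) := \langle\mathbf{w},\theta\rangle - \log Z(\mathbf{w})$, which is exactly the $Q$-normalized log-likelihood of the MLN $\{(\alpha_i,w_i)\}$ on $\widehat{\omega}$ over the worlds $\Omega_{\Phi_0}$. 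Standard exponential-family facts give that $\ell$ is concave with $\nabla\ell(\mathbf{w}) = \theta - \mathbb{E}_{p_{\mathbf{w}}}[Q(\Phi,\omega)]$ and $\nabla^2\ell(\mathbf{w}) = -\mathrm{Cov}_{p_{\mathbf{w}}}[Q(\Phi,\omega)]$; since all statistics lie in $[0,1]$, $\ell$ is $\sqrt{l}$-Lipschitz and $L$-smooth with $L = O(l)$. Because both $\theta$ and $\mathbb{E}_{p_{\mathbf{w}}}[Q(\Phi,\omega)]$ lie in the affine hull of $\textit{RMP}(\Phi,\Omega_{\Phi_0})$, the gradient always lies in the linear span $U$ of that affine hull minus $\theta$; moreover $\ell$ is unchanged under adding to $\mathbf{w}$ any vector orthogonal to $U$, so we may and do work entirely inside $U$, and gradient ascent started at $\mathbf{0}$ stays in $U$ with no need to compute $U$ explicitly.

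The key step is an a~priori bound on the maximizer. I would show that if $\theta$ lies in the $\eta$-interior of $\textit{RMP}(\Phi,\Omega_{\Phi_0})$ then $\ell$ attains its maximum over $U$ at some $\mathbf{w}^{\star}$ with $\|\mathbf{w}^{\star}\| \le R := \log|\Omega_{\Phi_0}|/\eta$. The argument is the usual one: for any unit $u\in U$, $\eta$-interiority supplies a distribution $P_u$ on $\Omega_{\Phi_0}$ with $\mathbb{E}_{P_u}[Q(\Phi,\omega)] = \theta + \eta u$, and the Gibbs variational identity $\log Z(\mathbf{w}) = \max_{P}\big(\langle\mathbf{w},\mathbb{E}_{P}[Q(\Phi,\omega)]\rangle + H(P)\big) \ge \langle\mathbf{w},\theta+\eta u\rangle$ then yields, on taking $u = \mathbf{w}/\|\mathbf{w}\|$, the bound $\ell(\mathbf{w}) \le -\eta\|\mathbf{w}\|$ for all $\mathbf{w}\in U$; since $\ell(\mathbf{0}) = -\log|\Omega_{\Phi_0}|$ and $\ell(\mathbf{w}^{\star})\ge\ell(\mathbf{0})$, the claimed bound follows, and coercivity of $\ell$ on $U$ (here $\eta>0$ is essential --- otherwise the likelihood need not be maximized at any finite weight vector) guarantees attainment. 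Finally $R$ is polynomial in $|\Delta|$ and $1/\eta$ because $\log|\Omega_{\Phi_0}| \le \log|\Omega| = O(\mathrm{poly}(|\Delta|))$ for a fixed signature.

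The algorithm is then projected gradient ascent, started at $\mathbf{0}$, on $\{\mathbf{w}:\|\mathbf{w}\|\le R\}$. The textbook bound for $L$-smooth concave maximization gives $\ell(\mathbf{w}^{\star}) - \ell(\mathbf{w}^{(T)}) \le LR^2/(2T)$, so $T = O(LR^2/\varepsilon) = \mathrm{poly}(|\Delta|,1/\eta,1/\varepsilon)$ iterations suffice; if $\eta$ is not supplied, a doubling search on $R$ that halts once the objective stops improving costs only a logarithmic overhead. Each iteration needs $\nabla\ell(\mathbf{w}^{(t)}) = \theta - \mathbb{E}_{p_{\mathbf{w}^{(t)}}}[Q(\Phi,\omega)]$: $\theta$ is read off $\widehat{\omega}$, and each coordinate $\mathbb{E}_{p_{\mathbf{w}^{(t)}}}[Q(\alpha_i,\omega)] = \partial_{w_i}\log Z(\mathbf{w}^{(t)})$ is obtained from the partition function, which for the $2$-variable theory consisting of $\Phi_0$ together with the biconditionals $\forall x_1,x_2:\xi_i\Leftrightarrow\alpha_i$ of Section~\ref{sec:WFOMC} is computable in time polynomial in $|\Delta|$ and the bit-length of $\mathbf{w}^{(t)}$ by \cite{van2014skolemization,broeck2011completeness}; the derivative itself is either taken from the domain-lifted gradient computation of \cite{van.haaren.mlj} or obtained by finite differencing $\log Z$. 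To keep the iterates short I would round each weight coordinate to $O(\log(TRL/\varepsilon))$ bits after every step and absorb the perturbation into an inexact-gradient version of the convergence bound; since the iterates never leave the radius-$R$ ball, the weights remain polynomially bounded throughout and every partition-function call is genuinely domain-lifted in the sense of Section~\ref{sec:preliminaries}.

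The main obstacle is not the optimization --- that part is routine once it is set up --- but using the $\eta$-interiority correctly in two places: converting it into the explicit norm bound $\|\mathbf{w}^{\star}\| \le \log|\Omega_{\Phi_0}|/\eta$ via maximum-entropy duality, and coping with the possible degeneracy of the polytope, which is exactly why the \emph{relative} notion of interiority is needed and why one must observe that ascent from $\mathbf{0}$ automatically remains in the affine-hull direction $U$. A secondary but unavoidable chore is the numerical bookkeeping: ensuring the weights handed to the WFOMC oracle at each iteration have polynomially many bits and that the accumulated rounding and finite-difference error stays below $\varepsilon$, so that the per-iteration cost remains domain-lifted.
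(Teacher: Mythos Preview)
The paper does not contain its own proof of this statement: Theorem~\ref{thm:mainaistats} is quoted verbatim as ``Theorem~11 in \cite{kuzelka.aistats.2019}'' and is used here only as a black box to be strengthened by the corollary that follows. There is therefore nothing in the present paper to compare your argument against line by line. What the paper does tell us about the original proof is in the surrounding discussion: ``the only reason why the proof of Theorem~\ref{thm:mainaistats} needs the assumption that the Markov logic network is from the $2$-variable fragment is that the weight learning algorithm needs the respective relational marginal polytope,'' i.e.\ the proof in \cite{kuzelka.aistats.2019} explicitly constructs $\mathbf{RMP}(\Phi,\Omega_{\Phi_0})$ (via a $2$-variable-specific routine) and then optimizes.

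Your route is genuinely different in that it never constructs the polytope. You extract everything you need from $\eta$ alone: the Gibbs variational inequality turns $\eta$-interiority into the coercivity bound $\ell(\mathbf{w})\le -\eta\|\mathbf{w}\|$ on the affine-hull direction $U$, which together with $\ell(\mathbf{0})=-\log|\Omega_{\Phi_0}|$ yields the radius $R=\log|\Omega_{\Phi_0}|/\eta$; projected gradient ascent on the $R$-ball then converges in $O(lR^2/\varepsilon)$ steps, each step a domain-lifted $Z$-computation. This is correct as stated, and it is arguably cleaner than going through the polytope --- in particular it already shows, without the paper's Theorem~\ref{theorem:partition_function}, that liftability of $Z$ suffices for liftable weight learning, which is precisely the corollary the paper is after. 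What the polytope-based route in \cite{kuzelka.aistats.2019} buys, conversely, is that once the facets are known one can also \emph{certify} interiority, detect infeasible statistics, and control the optimization geometry more directly (e.g.\ via explicit constraints rather than a norm ball); your doubling search on $R$ when $\eta$ is unknown is a workable substitute but less informative. The numerical bookkeeping you flag (keeping the bit-length of the iterates polynomial so that each WFOMC call stays domain-lifted under the paper's $\lambda_i=\ln a_i-\ln b_i$ convention) is indeed the one place where care is required, but your rounding-plus-inexact-gradient plan is the standard fix and does not hide any real obstacle.
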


The above theorem is restricted only to the $2$-variable fragment of Markov logic networks but a bit larger tractable fragments were also studied in the literature, e.g. \cite{DBLP:conf/nips/KazemiKBP16}. The only reason why the proof of Theorem \ref{thm:mainaistats} needs the assumption that the Markov logic network is from the $2$-variable fragment is that the weight learning algorithm needs the respective relational marginal polytope and the work \cite{kuzelka.aistats.2019} only gives a domain-lifted algorithm for computing relational marginal polytopes for the $2$-variable fragment. However, the result about domain-liftability that we presented in this section guarantees that the construction of relational marginal polytopes is always domain-liftable for Markov logic networks with domain-liftable inference. 
Hence we get a strengthening of the results from \cite{kuzelka.aistats.2019}. 

\begin{corollary}
The positive result stated in Theorem \ref{thm:mainaistats} holds for all Markov logic networks for which computing the partition function $Z$ is domain-liftable, i.e.\ not just for the $2$-variable fragment.
\end{corollary}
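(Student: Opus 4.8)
The plan is to revisit the proof of Theorem~\ref{thm:mainaistats} from \cite{kuzelka.aistats.2019}, isolate the single place where the $2$-variable restriction is genuinely used, and discharge that use by appealing to Theorem~\ref{theorem:partition_function}. Recall that the weight-learning algorithm underlying Theorem~\ref{thm:mainaistats} proceeds by approximately maximizing the log-likelihood of $\mathcal{M}$ on the training example $\widehat\omega$; this is a concave program in the weights $\mathbf{w}$ whose (sub)gradient in coordinate $i$ is the difference $Q_{\widehat\omega}(\alpha_i) - \mathbb{E}_{p_\mathbf{w}}[Q(\alpha_i,\cdot)]$. The algorithm needs exactly two non-trivial resources: (i) an explicit description of the feasible region, i.e.\ of the relational marginal polytope $\mathrm{RMP}(\Phi,\Omega_{\Phi_0})$, so that the search region has controlled diameter and the claimed dependence on $1/\varepsilon$ and on the interiority $\eta$ follows from the standard convex-optimization analysis; and (ii) a domain-lifted oracle for the model expectations $\mathbb{E}_{p_\mathbf{w}}[Q(\alpha_i,\cdot)]$, which are just the partial derivatives $\partial \ln Z/\partial \lambda_i$ of the log-partition function and are computable in time polynomial in $|\Delta|$ whenever $Z$ is (this is the gradient-liftability result of \cite{van.haaren.mlj}, and it already holds under the hypothesis of the corollary). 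Resource~(ii) is therefore free; the $2$-variable assumption in \cite{kuzelka.aistats.2019} is invoked only to obtain resource~(i), because that paper supplied a domain-lifted polytope-construction algorithm only for the $2$-variable fragment.

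First I would invoke Theorem~\ref{theorem:partition_function}: since the MLNs $\{(\alpha_i,\lambda_i)\}$ in question are assumed domain-liftable, the facets of $\mathrm{RMP}(\Phi,\Delta)$ are computable in time polynomial in $|\Delta|$, and --- as is explicit in its proof --- the resulting halfspace description has polynomially-bounded bit complexity (polynomially-bounded integer normal vectors and polynomially-bounded integer offsets). The one mild extension needed is to accommodate the hard background theory $\Phi_0$, i.e.\ to work with $\mathrm{RMP}(\Phi,\Omega_{\Phi_0})$ rather than $\mathrm{RMP}(\Phi,\Delta)$: this is immediate, because imposing $\Phi_0$ as hard constraints on each MLN built inside the proof of Theorem~\ref{theorem:partition_function} keeps those MLNs in the assumed domain-liftable class (this is precisely the content of the hypothesis that inference remains domain-liftable in the presence of $\Phi_0$), and the counting argument separating partition-function value $\le |\Omega_{\Phi_0}|^{2b+1}$ from value $\ge |\Omega_{\Phi_0}|^{2b+2}$ is unaffected by restricting $\Omega$ to $\Omega_{\Phi_0}$. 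This yields, in time polynomial in $|\Delta|$, exactly the polytope description required by resource~(i).

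With both resources available, I would then re-run the optimization argument of \cite{kuzelka.aistats.2019} essentially verbatim, feeding it the polytope description just obtained and the domain-lifted expectation oracle: the number of iterations is polynomial in $1/\varepsilon$ and $1/\eta$ by the same convexity-plus-interiority analysis, each iteration runs in time polynomial in $|\Delta|$ and in the bit sizes involved, and the overall running time is polynomial in $|\Delta|$, $1/\varepsilon$ and $1/\eta$, as required. The main point to verify carefully --- and really the only obstacle --- is that nothing else in the proof of Theorem~\ref{thm:mainaistats} covertly relies on having at most two variables: specifically, that the polynomial bound on the number of vertices (hence facets) of the polytope, the a-priori bound on the magnitudes of near-optimal weights, and the bit-complexity bookkeeping all hold using only the generic facts that the number of formulas $m$ is a fixed constant, that the polytope has polynomially many vertices in $|\Delta|$, and that Theorem~\ref{theorem:partition_function} returns a polynomially-sized facet description. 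Each of these holds for arbitrary domain-liftable MLNs, so the corollary follows.
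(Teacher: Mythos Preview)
Your proposal is correct and follows essentially the same line as the paper: the paper's justification (given in the paragraph immediately preceding the corollary, with no separate formal proof) is precisely that the $2$-variable assumption in Theorem~\ref{thm:mainaistats} is used \emph{only} to obtain a domain-lifted construction of the relational marginal polytope, and Theorem~\ref{theorem:partition_function} supplies that construction for every domain-liftable MLN. You are in fact more careful than the paper on one point: Theorem~\ref{theorem:partition_function} as stated concerns $\mathbf{RMP}(\Phi,\Delta)$ rather than $\mathbf{RMP}(\Phi,\Omega_{\Phi_0})$, and you explicitly note the (easy) extension needed to accommodate the hard background theory $\Phi_0$, which the paper leaves implicit.
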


\section{Complexity of the Containment Test}\label{sec:containment}

In this section we study computational complexity of deciding if a point is contained in the relational marginal polytope given by a fixed list of first-order logic formulas and a domain $\Delta$. In particular we are interested in the complexity w.r.t.\ the domain size as a parameter, i.e.\ in domain-liftability of the problem. We prove a negative result showing that, unless $\mathbf{ETIME} = \mathbf{NETIME}$, there is no algorithm for deciding if a point $\theta = \okedit{(p_1/q_1,\dots,p_k/q_k)}$ is contained inside $\mathbf{RMP}(\Phi, \Delta)$ running in time polynomial in $|\Delta|$, \okedit{$\sum_{i=1}^k p_i$ and $\sum_{i=1}^k q_i$}.

The proof of the negative result borrows ideas from the work of Jaeger \cite{jaeger}. In particular, we will need the next two results.

\begin{proposition}[Corollary 3.4 in \cite{jaeger}]\label{corollary:jaeger}
If $\mathbf{ETIME} \neq \mathbf{NETIME}$ then there exists a function-free and constant-free first-order sentence $\Phi$ such that $\{\textit{unary}(n) | n \in \textit{spec}(\Phi) \}$ is not recognized in deterministic polynomial time.
\end{proposition}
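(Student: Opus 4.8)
The plan is to prove the contrapositive: assuming that for \emph{every} function-free and constant-free first-order sentence $\Phi$ the set $\{\textit{unary}(n) \mid n \in \textit{spec}(\Phi)\}$ is decidable in deterministic polynomial time, I would derive $\mathbf{ETIME} = \mathbf{NETIME}$, contradicting the hypothesis. The one nontrivial external input is the Jones--Selman theorem (Theorem~\ref{thm:jonesselman}), which identifies the spectra of such sentences \emph{exactly} with the members of $\mathbf{NETIME}$.

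First I would fix an arbitrary language $\mathcal{A} \subseteq \mathbb{N}$ with $\mathcal{A} \in \mathbf{NETIME}$, where natural numbers are encoded in binary as usual. By Theorem~\ref{thm:jonesselman}, $\mathcal{A} = \textit{spec}(\Phi)$ for some function-free and constant-free sentence $\Phi$, so by the standing assumption the unary version $\mathcal{A}_1 := \{\textit{unary}(n) \mid n \in \mathcal{A}\}$ is decidable in time $p(n)$ for some polynomial $p$ depending on $\mathcal{A}$ (here $n$ is simultaneously the number and the length of its unary encoding).

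The heart of the argument is then a routine padding/unpadding translation. Given a binary input of length $\ell$, it encodes a number $n < 2^{\ell}$; to decide $\mathcal{A}$ one converts the binary string to its unary form (in time $O(2^{\ell})$) and runs the assumed polynomial-time decider for $\mathcal{A}_1$, which costs $p(n) \le p(2^{\ell}) = 2^{O(\ell)}$. Hence $\mathcal{A} \in \mathbf{DTIME}(2^{O(\ell)}) \subseteq \mathbf{ETIME}$. Since $\mathcal{A} \in \mathbf{NETIME}$ was arbitrary, this yields $\mathbf{NETIME} \subseteq \mathbf{ETIME}$; the reverse inclusion $\mathbf{ETIME} \subseteq \mathbf{NETIME}$ is immediate because $\mathbf{DTIME}(f) \subseteq \mathbf{NTIME}(f)$. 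Thus $\mathbf{ETIME} = \mathbf{NETIME}$, a contradiction, and the Proposition follows. (This mirrors Jaeger's argument for Corollary~3.4 in \cite{jaeger}.)

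I expect the only real subtlety — and it is a minor one — to be the encoding bookkeeping: ``polynomial time'' for the unary problem means polynomial in $n$ itself, which becomes time $2^{O(\ell)}$ once $\ell \approx \log_2 n$ is taken as the binary input length, so one must track the unary$\leftrightarrow$binary conversions carefully to make sure the resulting deterministic algorithm lands inside $\bigcup_{c>0}\mathbf{DTIME}(2^{c\ell})$. Everything else reduces to a direct invocation of the Jones--Selman theorem.
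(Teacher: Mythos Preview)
The paper does not supply its own proof of this proposition: it is quoted as Corollary~3.4 from \cite{jaeger} and used as a black box. Your reconstruction is correct and is exactly the standard argument---contrapositive plus Jones--Selman plus the unary/binary padding translation---so there is nothing to compare against beyond noting that your write-up matches the intended derivation.
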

% Theorem 3.3
% (Jones and Selman 1972) 
% Corollary 3.4
% If NETIME ̸= ETIME, then there exists a first-order sentence φ, such that {un(n) | n ∈ spec(φ)} is not recognized in deterministic polynomial time.

\begin{lemma}[Proposition 4.2 in \cite{jaeger}]\label{lemma:jaeger}
Let $\Phi(\mathbf{x})$ be a quantifier-free first-order logic formula (possibly containing constants and function symbols). Let $\mathcal{S}$ be the set of relation symbols and $\mathcal{S}^F$ the set of function symbols and constants contained in $\Phi(\mathbf{x})$. Let $\mathcal{S}^+$ be a set of new relation symbols that for every $k$-ary $f \in \mathcal{S}^F$ contains a $k + 1$-ary $R^f$ (constant symbols are treated as $0$-ary function symbols). Let $\textit{Func}$ be a set of sentences such that for every $f \in \mathcal{S}^F$ it contains the following first-order logic sentences:
\begin{align}
    \forall \mathbf{x}, y, y' : R^f(\mathbf{x},y) \wedge R^f(\mathbf{x},y') \Rightarrow y = y' \\
    \forall \mathbf{x} \exists y : R^f(\mathbf{x},y). \label{eq:rf}
\end{align}
Then there exists a formula $\Phi^+(\mathbf{x},\mathbf{z})$ without quantifiers, function symbols and constants such that the following are equivalent for all $n$:
\begin{enumerate}
    \item there exists an $\omega \in \Omega_{\mathcal{S} \cup \mathcal{S}^F}$ \okedit{on domain $\Delta = \{1,\dots,n\}$} such that $\omega \models \forall \mathbf{x} : \Phi(\mathbf{x})$,
    \item there exists an $\omega^+ \in \Omega_{\mathcal{S} \cup \mathcal{S}^+}$ \okedit{on domain $\Delta = \{1,\dots,n\}$} such that $\omega^+ \models \textit{Func} \wedge \forall \mathbf{x}, \mathbf{z} : \Phi(\mathbf{x},\mathbf{z})$.
\end{enumerate}
\end{lemma}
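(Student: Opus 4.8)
The plan is to carry out the standard ``flattening'' construction, in which every function symbol is reified as its graph relation, and then to verify its correctness model by model on each domain $\{1,\dots,n\}$. Concretely, I would first collect the set $T$ of \emph{function terms} occurring in $\Phi(\mathbf{x})$: the subterms of the form $f(t_1,\dots,t_k)$ with $f \in \mathcal{S}^F$ of arity $k$, the case $k=0$ being the constants. Since $\Phi$ is a fixed finite quantifier-free formula, $T$ is finite, so I may introduce one fresh variable $z_t$ for each $t \in T$ and set $\mathbf{z} = (z_t)_{t \in T}$; this is what makes $\Phi^+$ a legitimate fixed formula whose number of variables depends only on $\Phi$. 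For uniformity put $z_{x_i} := x_i$ for the plain variables of $\Phi$. For each $t = f(t_1,\dots,t_k) \in T$ let $\mathrm{body}(t) := R^f(z_{t_1},\dots,z_{t_k},z_t)$, let $\textit{Def}(\mathbf{x},\mathbf{z}) := \bigwedge_{t \in T} \mathrm{body}(t)$ (intuitively, $\textit{Def}$ asserts that each $z_t$ carries the value to which $t$ evaluates, given the interpretation of $\mathbf{x}$ and the graph relations $R^f$), and let $\widetilde{\Phi}(\mathbf{x},\mathbf{z})$ be obtained from $\Phi$ by replacing each \emph{maximal} function term by the corresponding $z$-variable; then $\widetilde{\Phi}$ is quantifier-free, function-free and constant-free, and I define $\Phi^+ := \textit{Def}(\mathbf{x},\mathbf{z}) \Rightarrow \widetilde{\Phi}(\mathbf{x},\mathbf{z})$.

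For the direction $(1) \Rightarrow (2)$, given $\omega$ on $\{1,\dots,n\}$ with $\omega \models \forall\mathbf{x} : \Phi(\mathbf{x})$, I would build $\omega^+$ by keeping the interpretations of the symbols in $\mathcal{S}$ and interpreting each $R^f$ as the graph of $f^{\omega}$ (for a constant $c$, the singleton $\{c^{\omega}\}$). Then $\omega^+ \models \textit{Func}$ is immediate because $f^{\omega}$ is a total function, and to check $\omega^+ \models \forall\mathbf{x},\mathbf{z} : \Phi^+$ one fixes an assignment to $\mathbf{x},\mathbf{z}$: if $\textit{Def}$ fails the implication is vacuously true, while if $\textit{Def}$ holds then an easy induction on term structure (using that $R^f$ is the graph of $f^{\omega}$) forces every $z_t$ to equal the value of $t$ under that assignment, so $\widetilde{\Phi}$ evaluates exactly as $\Phi$ does on the assignment, which is true.

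For $(2) \Rightarrow (1)$, given $\omega^+ \models \textit{Func} \wedge \forall\mathbf{x},\mathbf{z} : \Phi^+$, the totality axiom \eqref{eq:rf} together with functionality guarantees that each $R^f$ is the graph of a genuine total function on $\{1,\dots,n\}$, which I take as $f^{\omega}$; the $\mathcal{S}$-relations carry over unchanged. For any assignment to $\mathbf{x}$, instantiating each $z_t$ to the value of $t$ computed under these function interpretations makes $\textit{Def}$ true, hence $\widetilde{\Phi}$ true, hence $\Phi$ true on that assignment, so $\omega \models \forall\mathbf{x} : \Phi(\mathbf{x})$.

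The only delicate points I expect are bookkeeping ones: substituting \emph{maximal} terms and letting $\textit{Def}$ chain the intermediate values so that nested terms are handled correctly, treating constants uniformly as $0$-ary function symbols with singleton graph relations, and noting explicitly that it is the totality axiom \eqref{eq:rf}, not merely functionality, that lets the $(2) \Rightarrow (1)$ direction recover honest function interpretations. None of these is a genuine obstacle; the real content is just in stating the construction precisely, after which both implications follow by routine structural induction on terms.
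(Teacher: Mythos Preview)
The paper does not give its own proof of this lemma; it is quoted verbatim as Proposition~4.2 from Jaeger's work and used as a black box. Your proposal is the standard ``flattening'' or term-reification argument and is correct: introducing one fresh variable per function subterm, conjoining the graph atoms into $\textit{Def}$, replacing maximal function terms to obtain $\widetilde{\Phi}$, and setting $\Phi^+ := \textit{Def} \Rightarrow \widetilde{\Phi}$ is exactly how one proves this kind of result, and your two directions are sound (in particular you correctly flag that the totality axiom~\eqref{eq:rf} is what lets you recover honest function interpretations in the $(2)\Rightarrow(1)$ direction). One cosmetic point: the paper's statement writes $\Phi(\mathbf{x},\mathbf{z})$ in condition~2 where it clearly means $\Phi^+(\mathbf{x},\mathbf{z})$; you have silently read it the right way.
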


Next, we state and prove our main negative result.

\begin{theorem}
There exists a list of first-order logic formulas $\Gamma = (\alpha_1,\dots,\alpha_k)$ such that the following holds. If $\mathbf{ETIME} \neq \mathbf{NETIME}$ then there is no algorithm for deciding if a point $\theta = (p_1/q_1,\dots,p_k/q_k)$, where $p_i,q_i \in \mathbb{N}$ is contained in the relational marginal polytope $\mathbf{RMP}(\Gamma, \Delta)$ that runs in time polynomial the size of the domain $\Delta$ and in $\sum_{i=1}^k p_i$ and $\sum_{i=1}^k q_i$. The hardness result holds even if the point $\theta$ is guaranteed not to be equal to a vertex of the polytope.
\end{theorem}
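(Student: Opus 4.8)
The plan is to encode the spectrum of a hard sentence into a relational marginal polytope and argue that a single membership query on a carefully chosen point decides spectrum membership. By Proposition~\ref{corollary:jaeger}, if $\mathbf{ETIME}\neq\mathbf{NETIME}$ there is a function-free, constant-free sentence $\Phi$ whose spectrum $\textit{spec}(\Phi)$, presented in unary, is not in deterministic polynomial time. A priori $\Phi$ is an arbitrary first-order sentence, so the first step is to bring it into a shape that can be read off from a relational marginal statistic: I would Skolemize $\Phi$ to obtain a universally quantified sentence $\forall \mathbf{x},\mathbf{z}:\Phi'(\mathbf{x},\mathbf{z})$ over an expanded relational vocabulary together with the functionality axioms $\textit{Func}$, using Lemma~\ref{lemma:jaeger}; note that $\textit{Func}$ contains an existential sentence \eqref{eq:rf}, but that one can be Skolemized again via the same lemma, so after finitely many applications I get a single universally quantified, quantifier-free, function-free, constant-free formula $\forall \mathbf{x}:\Psi(\mathbf{x})$ over some finite relational signature $\mathcal{S}'$ such that $n\in\textit{spec}(\Phi)$ iff $\Psi$ has a model on $\{1,\dots,n\}$.

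The next step is to convert ``$\Psi$ has a model on the domain'' into a statement about a relational marginal statistic being achievable. Take the formula list $\Gamma = (\alpha_1,\dots,\alpha_k)$ to consist of (i) the negation $\neg\Psi(\mathbf{x})$ — more precisely, write $\neg\Psi$ as a conjunction/disjunction of literals and, if needed, split it into a few quantifier-free formulas — so that $Q(\neg\Psi,\omega)=0$ exactly when $\omega\models\forall\mathbf{x}:\Psi(\mathbf{x})$; and (ii) possibly a trivial extra formula or two to make the target point non-vertex. Then the point $\theta$ with the coordinate corresponding to $Q(\neg\Psi,\cdot)$ set to $0$ lies in $\mathbf{RMP}(\Gamma,\Delta)$ over a domain $\Delta$ of size $n$ if and only if there is a distribution on possible worlds with $\mathbb{E}[Q(\neg\Psi,\omega)]=0$, i.e.\ supported entirely on worlds satisfying $\forall\mathbf{x}:\Psi(\mathbf{x})$, which happens precisely when at least one such world exists, i.e.\ when $n\in\textit{spec}(\Phi)$. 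Since $\textit{spec}(\Phi)$ is infinite or cofinite unless trivial, one can always adjust the other coordinates of $\theta$ by picking a convex combination of two satisfying worlds (or pad $\Gamma$ with a dummy formula whose statistic ranges over a nondegenerate interval) so that $\theta$ is guaranteed to be a relative-interior-type point rather than a vertex; this handles the last sentence of the statement.

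Finally I would assemble the reduction and check the complexity bookkeeping. Given $n$ in unary, we form $\Delta=\{1,\dots,n\}$ (size polynomial in the input length), the fixed formula list $\Gamma$ (independent of $n$), and the point $\theta$ whose numerators $p_i$ and denominators $q_i$ are bounded by small polynomials in $n$ — crucially the coordinate that must equal $0$ contributes $p_i=0,\ q_i=1$, and any dummy coordinates can be given denominators like $2$ or $n$, so $\sum_i p_i$ and $\sum_i q_i$ are polynomial in $n$. A hypothetical algorithm deciding $\theta\in\mathbf{RMP}(\Gamma,\Delta)$ in time polynomial in $|\Delta|$, $\sum p_i$, and $\sum q_i$ would therefore decide $n\in\textit{spec}(\Phi)$ in time polynomial in $n$, contradicting Proposition~\ref{corollary:jaeger}. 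The main obstacle I anticipate is step two's bookkeeping: one must make sure that Skolemization via Lemma~\ref{lemma:jaeger} really can be iterated to eliminate all quantifiers (the lemma as stated removes function symbols but is driven by a universally quantified $\Phi(\mathbf{x})$, so the existential Skolem axioms need their own pass), and that the resulting $\neg\Psi$ can be expressed as a constant-size list of quantifier-free formulas whose $Q$-statistics jointly vanish iff $\forall\mathbf{x}:\Psi(\mathbf{x})$ holds — together with arranging the non-vertex guarantee without disturbing the equivalence.
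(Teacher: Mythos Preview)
Your overall shape is right---reduce spectrum membership to a polytope query---but the step where you ``iterate Lemma~\ref{lemma:jaeger} finitely many times'' to land on a single universally quantified, quantifier-free, function-free $\forall\mathbf{x}\,\Psi(\mathbf{x})$ cannot work, and this is a genuine gap rather than bookkeeping. Two reasons:

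\begin{itemize}
\item \emph{The iteration does not terminate.} Skolemizing $\forall\mathbf{x}\,\exists y\,R^{f}(\mathbf{x},y)$ introduces a fresh function symbol $g$, and applying Lemma~\ref{lemma:jaeger} to that produces a new relation $R^{g}$ together with a new axiom $\forall\mathbf{x}\,\exists y\,R^{g}(\mathbf{x},y)$ of exactly the same form. You are back where you started, with one more relation.
\item \emph{More fundamentally, no such $\Psi$ can exist.} If $\Psi(\mathbf{x})$ is quantifier-free, function-free and constant-free, then models of $\forall\mathbf{x}\,\Psi(\mathbf{x})$ are closed under induced substructures: restricting a model to any subset of the domain still satisfies every grounding of $\Psi$. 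Hence the spectrum of $\forall\mathbf{x}\,\Psi(\mathbf{x})$ is downward closed, i.e.\ either a finite initial segment of $\mathbb{N}$ or all of $\mathbb{N}$, and in particular decidable in constant time. So it can never coincide with a spectrum that is hard in the sense of Proposition~\ref{corollary:jaeger}.
\end{itemize}

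The paper confronts exactly this obstruction and does \emph{not} try to remove the existential axioms $\forall\mathbf{x}\,\exists y\,R^{f_i}(\mathbf{x},y)$. Instead it keeps, for each Skolem relation, the pair of quantifier-free formulas $\beta_i'$ (functionality) and $\gamma_i' = A_{\gamma_i}(\mathbf{x},y)\wedge R^{f_i}(\mathbf{x},y)$, and encodes the existential requirement through a \emph{non-extremal} target value: it asks for $Q(\gamma_i',\omega)=1/|\Delta|$ rather than $0$ or $1$. The point is that once $\beta_i'$ forces at most one $y$ per $\mathbf{x}$, we have $Q(\gamma_i',\omega)\le 1/|\Delta|$, with equality iff every $\mathbf{x}$ has a witness. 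The target point is therefore
\[
\theta=\Bigl(1-\varepsilon,\;1-\varepsilon,\;\tfrac{1-\varepsilon}{|\Delta|},\;\dots,\;1-\varepsilon,\;\tfrac{1-\varepsilon}{|\Delta|}\Bigr),
\]
and the nontrivial ($\Rightarrow$) direction uses Carath\'eodory's theorem plus a quantitative argument (balancing the mass $S_2$ on ``bad'' worlds against the gap $1/|\Delta|^{|\mathbf{x}|+1}$) to show that some world in the convex decomposition must hit all three conditions simultaneously. The auxiliary predicates $A_{\Phi'}, A_{\beta_i}, A_{\gamma_i}$ are what make the non-vertex guarantee clean: setting them all false yields a world with $Q(\Gamma,\cdot)=\mathbf{0}$, so $\theta$ is a strict convex combination of $\mathbf{0}$ and $\theta/(1-\varepsilon)$ regardless of how many models $\Phi^{\textit{Skol}}$ has. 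Your ``dummy formula / two satisfying worlds'' idea is in the right spirit but would not give you control independent of the (unknown) model count.
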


\noindent {\em Proof idea.}
We use the techniques which were devised by Jaeger to prove lower bounds on the complexity of weighted model counting. Specifically, Proposition \ref{corollary:jaeger} tells us that there exists a function-free first-order logic sentence $\Phi^*$ whose spectrum is not recognizable in time polynomial in the size of the domain (assuming $\mathbf{ETIME} \neq \mathbf{NETIME}$). 
Lemma \ref{lemma:jaeger} allows us to convert that sentence to a relatively manageable form. The conversion is done by first Skolemizing the sentence $\Phi^*$ (which introduces function symbols) and then using Lemma \ref{lemma:jaeger} to get rid of the function symbols. Although Lemma \ref{lemma:jaeger} introduces new existential quantifiers, these may then appear only in (\ref{eq:rf}), which we will be able to deal with. Then what we need to show is that if we could decide membership of points in relational marginal polytopes efficiently then we could also efficiently decide satisfiability of $\Phi^*$ on domains of a given size. This is not completely straightforward because, first, the formulas in $\Gamma$ cannot have quantifiers and, second, the points should not correspond to statistics of individual possible worlds so that we could still guarantee that these points do not correspond to vertices of the polytopes. Hence, we need to select the point $\theta$ carefully; we will place it quite (polynomially) close to a potential vertex of the polytope. Finally, we  have to show that the point $\theta$ is indeed inside the polytope for domain of size $n$ if and only if the sentence $\Phi^*$ has a model on a domain of size $n$.

\begin{proof}
Let $\Phi^*$ be a first-order logic sentence with non-polynomial-time spectrum whose existence follows from Proposition \ref{corollary:jaeger}. Let $\forall \mathbf{x} : \Phi(\mathbf{x})$ be a Skolemization of $\Phi^*$ and let 
$\Phi^{\textit{Skol}}(\mathbf{x}) = \textit{Func} \wedge \forall \mathbf{x},\mathbf{z} : \Phi(\mathbf{x},\mathbf{z})$  
be constructed from $\Phi(\mathbf{x})$ as in Lemma \ref{lemma:jaeger}. Let us write 
\begin{multline*}
\textit{Func} = (\forall \mathbf{x}, y, y' : \beta_1(\mathbf{x},y,y')) \wedge (\forall \mathbf{x} \exists y : \gamma_1(\mathbf{x},y)) \wedge \dots \\
\wedge (\forall \mathbf{x}, y, y' : \beta_l(\mathbf{x},y,y')) \wedge (\forall \mathbf{x} \exists y : \gamma_l(\mathbf{x},y))    
\end{multline*}
where
%\begin{align*}
    $\beta_i(\mathbf{x},y,y') = R^{f_i}(\mathbf{x},y) \wedge R^{f_i}(\mathbf{x},y') \Rightarrow y = y'$ and
    $\gamma_i(\mathbf{x},y) = R^{f_i}(\mathbf{x},y)$. 
%\end{align*}
Finally we define 
%\begin{align*}
    $\Phi' = A_{\Phi'}(\mathbf{x}, \mathbf{z}) \wedge \Phi(\mathbf{x},\mathbf{z}),$
%\end{align*}
and for every $1 \leq i \leq l$ we define
\begin{align*}
    \beta_i' =& A_{\beta_i}(\mathbf{x},y,y') \wedge (R^{f_i}(\mathbf{x},y) \wedge R^{f_i}(\mathbf{x},y') \Rightarrow y = y'), \\
    \gamma_i' =& A_{\gamma_i}(\mathbf{x},y) \wedge R^{f_i}(\mathbf{x},y).
\end{align*}
where $A_{\Phi'}$, $A_{\beta_i}$'s and $A_{\gamma_i}$'s are auxiliary relations which will be useful in turn. Then we construct the list of formulas 
$\Gamma = (\Phi', \beta_1', \gamma_1', \ldots, \beta_l', \gamma_l').$ 
Next all we need to do is to select a point $\theta$ such that the respective polytope-membership query would allow us to decide if $\Phi^*$ has a model on a domain of size $n$. We define $\Delta = \{1,2,\dots,n\}$, $\varepsilon = \frac{1}{l} \cdot \frac{1}{2l+2} \cdot \frac{1}{|\Delta|^{|\mathbf{x}|+|\mathbf{z}|+2}} \cdot \frac{1}{|\Delta|^{|\mathbf{x}|+1}+1},$ and $\theta = \left(1-\varepsilon, 1-\varepsilon, \frac{1-\varepsilon}{|\Delta|}, \ldots,  1-\varepsilon, \frac{1-\varepsilon}{|\Delta|} \right)$. 
% \begin{align*}
    %, \\
    %\varepsilon =& \frac{1}{l} \cdot \frac{1}{2l+2} \cdot \frac{1}{|\Delta|^{|\mathbf{x}|+|\mathbf{z}|+2}} \cdot \frac{1}{|\Delta|^{|\mathbf{x}|+1}+1}, \\
    % \theta =& \left(1-\varepsilon, 1-\varepsilon, \frac{1-\varepsilon}{|\Delta|}, \ldots,  1-\varepsilon, \frac{1-\varepsilon}{|\Delta|} \right).
% \end{align*}
Here we note that $1/\varepsilon$ is polynomial in the size of the domain and so are  the representations of $\Delta$, $\varepsilon$ and $\theta$. It also follows from the way $\varepsilon$ is selected that $\theta$ cannot be a vertex of the polytope.

Next, we show that $\theta \in \mathbf{RMP}(\Gamma,\Delta)$ if and only if $\Phi^*$ has a model on a domain of size~$n$.

($\Rightarrow$) We will show that if $\theta \in \mathbf{RMP}(\Gamma,\Delta)$ then $\forall \mathbf{x} : \Phi^{\textit{Skol}}(\mathbf{x})$ has a model on a domain of size $n$. Since, by Lemma \ref{lemma:jaeger}, $\forall \mathbf{x} : \Phi^{\textit{Skol}}(\mathbf{x})$ has such a model if and only if $\Phi^*$ has a model on a domain of size $n$, it will also follow that if $\theta \in \mathbf{RMP}(\Gamma,\Delta)$ then $\Phi^*$ has a model on a domain of size $n$. 

First, using Carath\'{e}odory's theorem, we know that $\theta$ can be written as a convex combination: 
\begin{equation}\label{eq:convexcombination}
\theta = \sum_{j=1}^{2l+2} a_j \cdot Q(\Phi,\omega_j).
\end{equation}
Now we want to show that at least one of the possible worlds $\omega_i$ in (\ref{eq:convexcombination}), which we denote as $\widetilde{\omega}$, must satisfy the following three conditions:
\begin{enumerate}
    \item $Q(\Phi',\widetilde{\omega}) = 1$,
    \item $Q(\beta_i',\widetilde{\omega}) = 1$ for all $1 \leq i \leq l$,
    \item $Q(\gamma_i',\widetilde{\omega}) = \frac{1}{|\Delta|}$ for all $1 \leq i \leq l$.
\end{enumerate}
Let us denote $\theta' = (1,1,1/|\Delta|,1,1/|\Delta|,\dots,1,1/|\Delta|)$. Then the above conditions can be also written as $Q(\Gamma,\widetilde{\omega}) = \theta'$.
We start by showing that the first two conditions hold. Suppose, for contradiction, that one of these two conditions does not hold. Then $\|\theta'-\theta \|_\infty \geq \frac{1}{2l+2} \cdot \frac{1}{|\Delta|^{|\mathbf{x}|+|\mathbf{z}|+2}}$ which follows from the fact that $a_j \geq \frac{1}{2l+2}$ must hold for at least one of the $a_j$'s in (\ref{eq:convexcombination}). However, we also have $\|\theta'-\theta \|_\infty \leq \varepsilon$ and $\varepsilon < \frac{1}{2l+2} \cdot \frac{1}{|\Delta|^{|\mathbf{x}|+|\mathbf{z}|+2}}$, hence we have a contradiction. Therefore there must be at least one $\widetilde{\omega}$ that satisfies the first two conditions and there may, in fact, be multiple such worlds. Hence, $\widetilde{\omega}$ is a possible world on a domain of size $n$ which is a model of the first-order logic sentences $\forall \mathbf{x}, \mathbf{z} : \Phi(\mathbf{x},\mathbf{z})$, $\forall \mathbf{x}, y, y' : \beta_1(\mathbf{x},y,y')$, $\dots$, $\forall \mathbf{x}, y, y' : \beta_l(\mathbf{x},y,y')$.

To show that at least one possible world from (\ref{eq:convexcombination}) must satisfy all three conditions, we will need to strengthen the above argument. We define $S_1 = \sum_{j \in I_1} a_j$ where $I_1$ is the set of indices of those possible worlds $\omega_j$ from (\ref{eq:convexcombination}) that satisfy the first two conditions, and $S_2 = 1-S_1$. Using similar reasoning as above, we must have 
$S_2 \cdot \frac{1}{l+1} \cdot \frac{1}{|\Delta|^{|\mathbf{x}|+|\mathbf{z}|+2}} \leq \varepsilon $
as otherwise we would have $\|\theta-\theta' \|_\infty > \varepsilon$, which can be seen as follows. If one of the sentences $\forall \mathbf{x}, \mathbf{z} : \Phi(\mathbf{x},\mathbf{z})$, $\forall \mathbf{x}, y, y' : \beta_1(\mathbf{x},y,y')$, $\dots$, $\forall \mathbf{x}, y, y' : \beta_l(\mathbf{x},y,y')$ is violated then $\|Q(\Gamma,\omega_j) - \theta' \|_\infty \geq 1/|\Delta|^{|\mathbf{x}|+|\mathbf{z}|+2}$. We then obtain the inequality when we realize that each of the possible worlds $\omega_j$, where $j \in I_2$, must violate at least one of the $l+1$ formulas.

Hence it follows from the above that the next inequality must hold
\begin{equation}\label{eq:s21}
    S_2 \leq (l+1) \cdot |\Delta|^{|\mathbf{x}|+|\mathbf{z}|+2} \cdot \varepsilon.
\end{equation}

\noindent Next, we can notice that if $Q(\beta_i',\widetilde{\omega}) = 1$ then it must also hold $Q(\gamma_i',\widetilde{\omega}) \leq \frac{1}{|\Delta|}$. Suppose, for contradiction, that none of the possible worlds $\widetilde{\omega} \in \{\omega_{j} | j \in I_1 \}$ satisfies the third condition. Then it must be the case for each of these possible worlds that $Q(\gamma_i',\widetilde{\omega}) \leq \frac{|\Delta|^{|\mathbf{x}|}-1}{|\Delta|^{|\mathbf{x}|+1}}$ for at least one $\gamma_i'$. We need the following to hold for every $\gamma_i'$ from $\Gamma$: 
$
\sum_{j \in I_2} a_i \cdot Q(\gamma_i',\omega_j) + \sum_{j \in I_1} a_j \cdot Q(\gamma_i', \omega_j) = \frac{1-\varepsilon}{|\Delta|}
$
and we know that $\sum_{j \in I_2} a_i \cdot Q(\gamma_i',\omega_j) \leq S_2$ and $\sum_{j \in I_1} a_i \cdot Q(\gamma_i',\omega_j) \leq (1-S_2) \cdot \frac{|\Delta|^{|\mathbf{x}|}-1/l}{|\Delta|^{\mathbf{x}+1}}$ for at least one $\gamma_i'$.
It follows that we need (at the very least) the inequality 
%\begin{align*}
    %S_2 \geq \frac{1}{|\Delta|^{|\mathbf{x}|}} \cdot (1-S_2) - \frac{\varepsilon}{|\Delta|}.
    $S_2 + (1-S_2) \cdot \frac{|\Delta|^{|\mathbf{x}|}-1/l}{|\Delta|^{|\mathbf{x}|+1}} \geq \frac{1-\varepsilon}{|\Delta|}$
%\end{align*}
to be true. 
After simple algebraic manipulations this yields the inequality
\begin{equation}\label{eq:s22}
    S_2 \geq \frac{1/l-\varepsilon \cdot |\Delta|^{|\mathbf{x}|}}{|\Delta|^{|\mathbf{x}|+1}-|\Delta|^{|\mathbf{x}|}+1/l} \geq \frac{1/l-\varepsilon \cdot |\Delta|^{|\mathbf{x}|}}{|\Delta|^{|\mathbf{x}|+1}+1}
\end{equation}
Plugging $\varepsilon$ into (\ref{eq:s21}) yields
\begin{equation}
    S_2 \leq \frac{1}{2} \cdot \frac{1}{l} \cdot \frac{1}{|\Delta|^{|\mathbf{x}|+1}+1}
\end{equation}
whereas plugging $\varepsilon$ into (\ref{eq:s22}) yields
\begin{align*}
    S_2 & \geq \frac{\frac{1}{l}-\frac{1}{2l+2} \cdot \frac{1}{l} \cdot \frac{1}{|\Delta|^{|\mathbf{z}|+2}} \cdot \frac{1}{|\Delta|^{|\mathbf{x}|+1}+1}}{|\Delta|^{|\mathbf{x}|+1}+1} \\& \geq \frac{3}{4} \cdot \frac{1}{l} \cdot \frac{1}{|\Delta|^{|\mathbf{x}|+1}+1}.
\end{align*}
So we have arrived at a contradiction. It follows that at least one of the $\widetilde{\omega}$ must simultaneously satisfy conditions 1, 2 and 3. We have thus shown that if $\theta \in \mathbf{RMP}(\Gamma,\Delta)$ then $\Phi^*$ has a model on a domain of size $n$ which is what we needed to show.

% \textcolor{red}{under construction...}
% %First, it follows from the definition of the relational marginal polytope that if $\theta \in \mathbf{RMP}(\Gamma,\Delta)$ then there must be a possible world $\widetilde{\omega}$ such that $Q(\Gamma,\widetilde{\omega}) \succeq \theta$ where $\succeq$ denotes component-wise comparison of vectors (i.e., for vectors $\theta'$ and $\theta''$, $\theta' \succeq \theta''$ holds if and only if for all components of the two vectors we have $[\theta']_i \geq [\theta'']_i$, where $[\theta'']_i$ denotes the $i$-th component of the vector).
% %Second, it follows from the definition of the statistic $Q(\Gamma,\omega)$ that its values must lie on a grid with points that are no closer to each other in a single coordinate than $1/|\Delta|^M$ where $M$ is the maximum number of variables in any of the first-order logic formulas contained in $\Gamma$. 
% %It follows that $\widetilde{\omega}$ must satisfy $Q(\Phi',\widetilde{\omega}) = 1$, $Q(\beta_i',\widetilde{\omega}) = 1$ for all $1 \leq i \leq l$, and consequently also $Q(\beta_i,\widetilde{\omega}) = 1$ for all $1 \leq i \leq l$. 
%  To finish the proof of this direction of the implication, we need to show that $\widetilde{\omega}$ must also be a model of the first-order logic sentences $\forall \mathbf{x} \exists y : \gamma_i(\mathbf{x},y)$ for all $1 \leq i \leq l$. Again

($\Leftarrow$) If $\Phi^*$ has a model on a domain of size $n$ then so does $\forall \mathbf{x} : \Phi^{\textit{Skol}}(\mathbf{x})$. Hence, let $\omega^*$ be such a model of $\forall \mathbf{x} : \Phi^{\textit{Skol}}(\mathbf{x})$ on a domain of size $n$. Let $\omega^+$ be a possible world on the set of relations of $\omega^*$ extended by the $A_{\Phi'}$ and $A_{\beta_i}$ relations. Let $\omega^+$ agree with $\omega^*$ on all the relations from $\omega^*$ and let $\omega^+$ satisfy $\omega^+ \models (\forall \mathbf{x}, \mathbf{z} : A_{\Phi'}(\mathbf{x},\mathbf{z}))$, 
$\omega^+ \models (\forall \mathbf{x}, y, y' : A_{\beta_i}(\mathbf{x},y,y'))$ and 
$\omega^+ \models (\forall \mathbf{x}, y : A_{\gamma_i}(\mathbf{x},y))$, 
for all $1 \leq i \leq l$. Then it is not difficult to check that 
$Q(\Gamma,\omega^+) = \left(1,1, \frac{1}{|\Delta|}, 1, \frac{1}{|\Delta|}, \dots, 1,\frac{1}{|\Delta|} \right).$ 
Next let $\omega^-$ be a possible world constructed from $\omega^*$ in almost the same way as $\omega^+$ but this time satisfying $\omega^- \models (\forall \mathbf{x}, \mathbf{z} : \neg A_{\Phi'}(\mathbf{x},\mathbf{z}))$, 
$\omega^- \models (\forall \mathbf{x}, y, y' : \neg A_{\beta_i}(\mathbf{x},y,y'))$ and 
$\omega^- \models (\forall \mathbf{x}, y : \neg A_{\gamma_i}(\mathbf{x},y))$, 
for all $1 \leq i \leq l$. Then we have
$Q(\Gamma,\omega^-) = \left(0, 0, 0, 0, 0, \dots, 0, 0 \right).$
Since, by definition, $Q(\Gamma,\omega^+) \in \mathbf{RMP}(\Gamma,\Delta)$ and $Q(\Gamma,\omega^-) \in \mathbf{RMP}(\Gamma,\Delta)$ and since $\theta$ is a convex combination of $Q(\Gamma,\omega^+)$ and $Q(\Gamma,\omega^-)$, it follows that $\theta \in \mathbf{RMP}(\Gamma,\Delta)$.
\end{proof}

\section{Conclusion}\label{sec:conclusions}

In this paper, we studied the complexity of problems related to relational marginal polytopes. As our first main contribution, we proved that domain-liftability of computing the partition function of a Markov logic network carries over to the problem of constructing relational marginal polytopes, which allowed us to extend positive results on domain-liftability for weight learning of Markov logic networks from \cite{kuzelka.aistats.2019}. As our second main contribution, we showed the hardness of deciding whether a point is contained in a relational marginal polytope, assuming a widely believed complexity-theoretic conjecture.

In this paper, we were interested only in answering the theoretical questions of domain-liftability: what is and is not possible. The next step is to design algorithms that will also be efficient in practice.

\vspace{0.2cm}
\noindent {\bf Acknowledgements.} OK's work has been supported by the OP VVV project {\it CZ.02.1.01/0.0/0.0/16\_019/0000765} ``Research Center for Informatics'' and a donation from X-Order Lab. Part of this work was done while OK was already supported by the Czech Science Foundation project ``Generative Relational Models'' (20-19104Y).

% \bibliographystyle{unsrt}
% \bibliography{ref}

 \ifappendix
% \newpage
\appendix
\section{A Lemma Used in Theorem \ref{theorem:partition_function}}

% \subsection{Lemmas and Omitted Proofs}

\begin{lemma}\label{lemma:minimalset}
Given a set $S$ of linear inequality constraints, there is an algorithm to find a minimal subset $S'\subseteq S$ such that $S'$ specifies the same polytope as $S$, in polynomial time in the size of $S$. 
\end{lemma}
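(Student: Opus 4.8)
\emph{Proof proposal.}
The plan is to detect redundant inequalities one at a time by solving a single linear program, and to show that greedily discarding them yields an irredundant --- hence minimal --- subsystem. Call an inequality $c = (\vec{a}\cdot\vec{x}\le b)$ occurring in a system $T$ \emph{redundant in $T$} if $T$ and $T\setminus\{c\}$ cut out the same polytope, i.e.\ every point satisfying all inequalities of $T\setminus\{c\}$ also satisfies $c$. The key observation is that redundancy is decidable by one LP: $c$ is redundant in $T$ iff either $T\setminus\{c\}$ is infeasible, or the program $\max\{\vec{a}\cdot\vec{x} : \vec{x} \text{ satisfies } T\setminus\{c\}\}$ is feasible, bounded, and has optimum value $\le b$. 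Feasibility, boundedness and the optimal value (which, when finite, has polynomial bit-size) of a rational LP of size polynomial in $|S|$ can all be determined in time polynomial in $|S|$ (e.g.\ by the ellipsoid method), so each such test is cheap.

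The algorithm then is: initialize $T:=S$; process the inequalities $c_1,\dots,c_N$ of $S$ in arbitrary order, and for each $c_i$ test whether it is redundant in the current $T$, deleting it from $T$ if so and leaving $T$ unchanged otherwise; output the final $T=:S'$. Correctness has three parts. (i) \emph{Invariance}: by induction, every $T$ arising during the run defines the same polytope as $S$, since only redundant inequalities are ever removed. (ii) \emph{Running time}: at most $N\le|S|$ redundancy tests are performed, each polynomial in $|S|$. (iii) \emph{Minimality}: I claim $S'$ is irredundant. When $c_i$ was processed it was found \emph{non}-redundant in the then-current set $T^{(i)}\supseteq S'$, so there is a point $\vec{x}^{(i)}$ satisfying $T^{(i)}\setminus\{c_i\}$ with $\vec{a}_i\cdot\vec{x}^{(i)} > b_i$; since $S'\setminus\{c_i\}\subseteq T^{(i)}\setminus\{c_i\}$, this $\vec{x}^{(i)}$ also satisfies $S'\setminus\{c_i\}$, witnessing that $c_i$ is non-redundant in $S'$ as well. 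Finally, an irredundant system defining a polytope $P$ is genuinely minimal among subsets of $S$ defining $P$: if $S''\subsetneq S'$ also defined $P$ and $c\in S'\setminus S''$, then $S''\subseteq S'\setminus\{c\}$ would already imply $c$, contradicting irredundancy.

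A few details need care. The degenerate cases go through verbatim: if the polytope defined by $S$ is empty, ``redundant in $T$'' just means ``$T\setminus\{c\}$ is still infeasible'', and the output is a minimal infeasible subsystem; if the polytope is all of $\mathbb{R}^m$, every inequality is trivial and gets removed, giving $S'=\emptyset$. Equality constraints, if present, are encoded as the pair $\vec{a}\cdot\vec{x}\le b$, $-\vec{a}\cdot\vec{x}\le -b$ and treated like any other inequalities, so the argument is indifferent to whether the polytope is full-dimensional (which is what makes it usable in Theorem~\ref{theorem:partition_function}, where the $\mathbf{IRMP}$ may be lower-dimensional). The one step that is not merely mechanical, and which I expect to be the crux of the write-up, is observation~(iii): that non-redundancy of an inequality is preserved when the surrounding system shrinks, which is precisely what makes the greedy processing order irrelevant and the returned set truly minimal.
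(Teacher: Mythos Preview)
Your argument is correct and uses the same core idea as the paper --- test redundancy of an inequality by maximizing its left-hand side over the remaining constraints --- but with one algorithmic difference worth flagging. The paper tests each $c_j$ against the \emph{full} system $S\setminus\{c_j\}$ and keeps $c_j$ only if that LP optimum exceeds $b_j$; you instead process the inequalities greedily, testing each against the \emph{current} shrinking system $T$. Your variant is the more robust one: the paper's single-pass test can fail on systems with duplicate or mutually-implying constraints (if $c_1=c_2$, neither looks needed when tested against $S\setminus\{c_j\}$, so both get dropped), whereas your greedy deletion and your observation~(iii) --- that non-redundancy survives when the ambient system shrinks --- handle this correctly. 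You also explicitly treat infeasibility and unboundedness of the auxiliary LPs, which the paper leaves implicit. So: same approach in spirit, with your write-up being the cleaner and more careful of the two.
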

\begin{proof}
Without loss of generality, we assume that every constraint $c_j$ in $S$ is of the form $\sum_i a_{j,i} x_i \le b_j.$ We construct $|S|$ linear programs: The $i$-th linear program uses all constraints in $S$ except $c_j$ as the constraints, and its objective function is $\max \sum_i a_{j,i} x_i$. If the optimal solution of this linear program is strictly larger than $b_j$, then we add $c_j$ into $S'$. It is not difficult to see that every constraint in $S'$ cannot be implied by other constraints, or else that constraint cannot be added into $S'$, so $S'$ is minimal. Besides, we only have $|S|$ linear programs each of which can be solved in polynomial time (e.g., using some interior-point methods \cite{boyd2004convex}), hence the whole procedure is in polynomial time. 
\end{proof}

 \fi

\bibliographystyle{aaai}
\bibliography{ref}

\end{document}